\documentclass{article}

\PassOptionsToPackage{numbers, compress}{natbib}

\usepackage[preprint]{neurips_2026}

\usepackage[utf8]{inputenc} 
\usepackage[T1]{fontenc}    
\usepackage{hyperref}       
\usepackage{url}            
\usepackage{booktabs}       
\usepackage{amsfonts}       
\usepackage{nicefrac}       
\usepackage{microtype}      
\usepackage{xcolor}         
\usepackage{colortbl}
\usepackage{multirow}
\usepackage{graphicx}
\usepackage{amsmath}
\usepackage{amssymb}
\usepackage{mathtools}
\usepackage{amsthm}
\usepackage{enumitem}
\usepackage{array}
\usepackage{textcomp}
\usepackage{amssymb} 
\usepackage{caption}
\usepackage{wrapfig}
\usepackage{needspace}
\usepackage{graphicx}
\newtheorem{theorem}{Theorem}[section]

\newtheorem{lemma}{Lemma}[section]

\title{REACT: Physically and Chemically Consistent Reconstruction of Marine Active Tracers}

\author{%
  \mdseries
  Wenbin Dai\thanks{Equal contribution.},
  Hao Zheng\footnotemark[1],
  Shiyu Liang\footnotemark[1]\hspace{0.25em}\thanks{Corresponding author.},
  Chaofan Sun,
  Xueying Zhang,\\
  Hanbo Huang,
  Xuan Gong,
  Yiran Zhang,
  Enhui Liao\\
  Shanghai Jiao Tong University\\
  Shanghai, China
}

\begin{document}

\maketitle

\begin{abstract}

Reconstructing global sea surface pH from sparse observations is critical for monitoring ocean acidification and understanding marine carbon cycling. Traditional assimilation and inverse models are physically grounded but costly for large-scale reconstruction. Recent black-box and physics-guided AI models improve efficiency, but are mainly designed for passive tracers, where the reconstructed variable is also the transported inventory. In contrast, pH is an active carbonate tracer: it is the prediction target, while dissolved inorganic carbon (DIC) is the conserved carbon inventory. This mismatch can produce low pH error while violating carbonate closure and source-free carbon conservation. To address this, we introduce \textbf{REACT}, a carbon-first reconstruction framework that decouples transport, active correction, and chemical decoding. REACT transports a latent carbonate state with a conservative advection--diffusion solver, captures non-conservative carbon-cycle variations with a source module, decodes the corrected state into pH, and constrains the output through carbonate equilibrium. This design keeps pH as the target while enforcing consistency on the underlying carbon state. On simulation data, REACT reduces pH NRMSE by \(14.7\%\) and chemical consistency error by \(24.0\%\) over the best baseline. Cross-temporal-scale evaluations show robustness against error accumulation from coarse to fine temporal scales, and ablation studies validate the effectiveness of each component.

\end{abstract}

\section{Introduction}
\label{sec:introduction}
                                                                          
Reconstructing global sea surface pH from sparse observations is essential for monitoring ocean acidification and marine carbon cycling. However, direct pH measurements remain highly sparse and irregular, with valid records covering only \(0.11\%\) of the global spatiotemporal grid according to the World Ocean Database~(WOD)\citep{WOD}. This challenge is further complicated by the fact that pH is not an unconstrained scalar variable, but a chemical coordinate of the ocean carbonate system governed by transport, biological activity, and carbonate equilibrium~\citep{zeebe2001co2,dickson2010carbon,humphreys2022pyco2sys}. Consequently, reconstruction models must infer dense pH fields from limited observations while preserving consistency with the physical and chemical structure of the carbonate system.

Recent advances in deep learning have improved the reconstruction of geophysical fields. CNNs and transformers capture spatial or spatiotemporal correlations from gridded data~\citep{ronneberger2015unet,vit,gao2022earthformer}. Inpainting models and climate foundation models provide powerful generic completion tools~\citep{zhao2021large-CoModGAN,repaint,prithvi,atmorep}. Physics-guided methods further introduce PDE constraints, neural operators, or variational consistency~\citep{fno,marble,luo2024inr,beauchamp20234dvarnet,nuts}. However, most of these methods are designed for passive tracers, where the reconstructed variable is also the transported inventory. This assumption does not hold for pH.

Reconstructing sea surface pH requires a fundamentally different approach. Unlike conserved tracers, pH is a chemical coordinate of the carbonate system. The conserved carbon inventory is closer to dissolved inorganic carbon (DIC), while pH is determined by carbonate equilibrium jointly with variables such as surface \(p\mathrm{CO}_2\). This induces a structural mismatch: direct transport of pH may produce smooth fields with negligible error, while allowing the implied DIC to drift and potentially violating carbonate closure. Therefore, pH error alone is insufficient. A valid reconstruction should simultaneously recover accurate pH fields, maintain consistency with carbonate chemistry and preserve conservation in the source-free carbon state.

The main challenge is to disentangle the three mechanisms jointly represented by the pH observations. Conservative transport redistributes carbon through ocean flow, while non-conservative processes, including air--sea exchange, biological activity, vertical mixing, and unresolved reactions, modify the carbonate state. Carbonate equilibrium then maps the carbon state to pH. Modeling pH as a passive tracer therefore applies transport dynamics to an inappropriate variable, whereas fully black-box reconstruction avoids this mismatch at the cost of providing limited interpretability. A principled formulation should instead transport a carbon-related state, represent non-conservative effects through a source module, and decode pH subject to carbonate-system constraints.

We propose \textbf{REACT}, short for \textbf{RE}constructing \textbf{A}ctive \textbf{C}arbonate \textbf{T}racers, to address this mismatch via a carbon-first reconstruction principle. At its core is a latent carbonate state that serves as the transported physical coordinate, while pH remains the decoded target. The latent state is propagated through a conservative advection--diffusion solver, corrected by an active source module for non-conservative carbon-cycle effects, and decoded to pH under the carbonate equilibrium constraints of DIC and \(p\mathrm{CO}_2\). This design separates transport, active correction, and chemical decoding, preserving source-free carbon mass without imposing conservation on pH. We further show that, under standard conditions, small chemical and physical losses certify carbonate closure and represented DIC conservation, a guarantee that passive pH transport generally cannot provide. Empirically, REACT consistently improves pH reconstruction and chemical consistency on both simulation-based cross-temporal tasks and real-world sparse observations, while ablations validate the role of source correction, mass conservation, and chemical closure.
Our contributions are as follows:
\begin{itemize}[leftmargin=*]
    \item We formulate sea surface pH mapping as \textit{active carbonate-tracer reconstruction}, revealing the mismatch between the reconstructed variable, pH, and the conserved carbon inventory, DIC.

    \item We propose \textbf{REACT}, a carbon-first framework that combines latent carbonate-state transport, active source correction, pH decoding, and carbonate-system closure. We provide a theoretical analysis showing why carbon-state transport can satisfy chemical closure and source-free DIC conservation.

    \item We empirically demonstrate that REACT achieves state-of-the-art performance on both simulation-based and real-world sparse observation reconstruction. On simulation data, REACT reduces pH NRMSE by \(14.7\%\) and chemical consistency error by \(24.0\%\) over the best baseline.

    \item Cross-temporal-scale evaluations and ablation studies further validate the design of REACT, showing robustness against coarse-to-fine error accumulation and confirming the effectiveness of source correction, mass conservation and chemical closure.
\end{itemize}

\section{Related Work}

\textbf{Assimilation and Inversion Models.}
Traditional data assimilation and inverse modeling methods, including optimal interpolation~\citep{conkright2002world-OI}, 3D/4D-Var~\citep{courtier1994strategy-4D-var}, ensemble Kalman filtering~\citep{nerger2008improving-KalmanFilter}, and variational inverse models~\citep{brasseur1991application-VIM}, reconstruct continuous ocean states by combining sparse observations with background dynamics. They provide strong physical consistency, {but depend on iterative optimization, prescribed error assumptions, and repeated numerical model runs, making long-term high-resolution reconstruction \textit{computationally costly}}.

\textbf{Black-box Models.}
Data-driven reconstruction methods replace expensive model integration with learned mappings. CNN-based models such as U-Net~\citep{ronneberger2015unet} and transformer-based models such as ViT~\citep{vit} and Earthformer~\citep{gao2022earthformer} capture spatial or spatiotemporal correlations from gridded fields. General-purpose inpainting methods based on GANs~\citep{zhao2021large-CoModGAN} and diffusion models~\citep{repaint} improve generic completion quality, while foundation models such as Prithvi~\citep{prithvi} and AtmoRep~\citep{atmorep} learn representations from large-scale weather and climate data. However, these models learn physical structure only implicitly from data, \textit{without enforcing transport, conservation, or carbonate closure}. They can fill missing fields, but they do not guarantee physically faithful ocean reconstruction.

\textbf{Physics-constrained Models.}
Physics-constrained models further introduce physical priors into learning. Neural operators such as FNO~\citep{fno}, neural field models such as GridMix~\citep{marble}, and implicit neural representations~\citep{luo2024inr} model PDE-governed fields, while 4DVarNet~\citep{beauchamp20234dvarnet} introduces variational consistency and NUTS~\citep{nuts} explicitly models transport dynamics for ocean reconstruction. However, these methods place dynamics on the reconstructed variable itself. This works for passive tracers, but \textit{fails for active carbonate tracers}: pH is the target, while carbon inventory is the physical state.

\begin{figure}
    \centering
    \captionsetup{font=footnotesize,skip=2pt}
    \includegraphics[width=\linewidth]{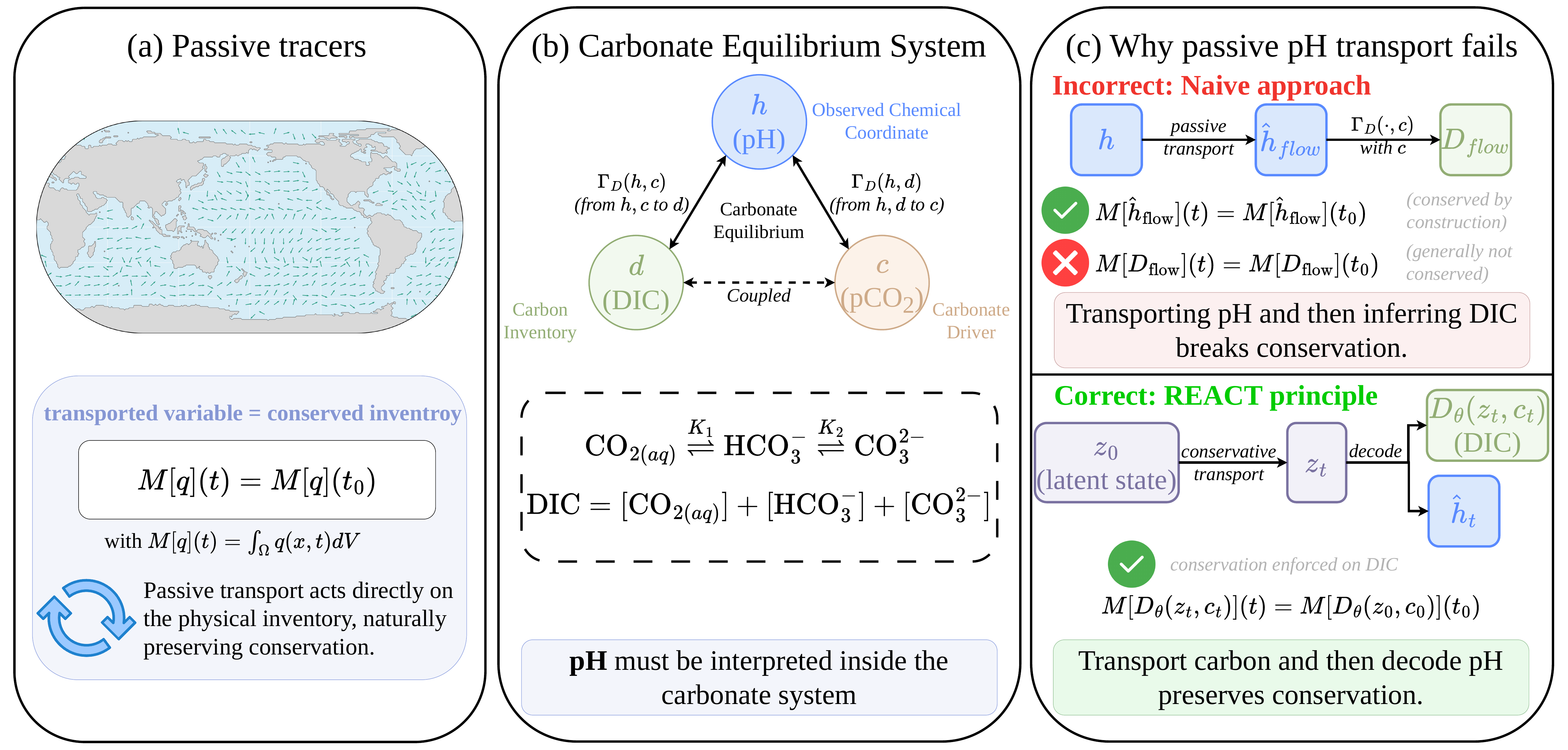}
    \caption{\textbf{Conservation-aware transport of pH in the carbonate system.} (a) Passive tracers are transported as conserved physical inventories and therefore obey conservation laws. (b) In contrast, pH is an equilibrium-dependent chemical coordinate coupled to carbonate variables such as DIC and $p\mathrm{CO}_2$. (c) Treating pH as a passive tracer preserves only the transported pH field and does not generally conserve the associated carbon inventory, while introducing a latent conserved state that decodes to both pH and DIC enables transport consistent with DIC conservation.}
    \label{fig:active}
\vspace{-15pt}
\end{figure}

\section{Problem Formulation: Active Tracer Reconstruction}

\textbf{Notations.}
Let \(\Omega\subset S^2\) denote the ocean surface, with spatial coordinate \(x\) and time \(t\in[0,T]\). For a field \(q\), write \(\dot q=\partial q/\partial t\), and use \(\nabla\cdot\), \(\nabla\), and \(\nabla^2\) for spherical divergence, gradient, and Laplacian. Let \(u\) be the surface velocity, \(K\) the diffusion tensor, \(h\) the surface pH, \(d\) dissolved inorganic carbon (DIC), and \(c\) surface \(p\mathrm{CO}_2\). We write \(M[r](t)=\int_\Omega r(x,t)\,dx\) for spatial mass, and \(\mathcal I=\mathcal Z\times\mathcal T\) for the observed space--time sites.

\textbf{Active vs. passive tracers: chemical consistency.}
As illustrated in Figure~\ref{fig:active}, pH is not a standalone transported scalar. It is a chemical coordinate of the carbonate system. {Carbonate equilibrium gives two deterministic maps: \(\Gamma_D(h,c)\) returns the DIC implied by pH and \(p\mathrm{CO}_2\), while \(\Gamma_C(h,d)\) returns the \(p\mathrm{CO}_2\) implied by pH and DIC. }Therefore, a reconstructed pH field \(\hat h\) is chemically meaningful only when \(\Gamma_D(\hat h,c)\approx d\) and \(\Gamma_C(\hat h,d)\approx c\). We measure this disagreement on the observation set \(\mathcal I\):
\begin{equation}\label{eq::chemical}
\mathcal E_{\mathrm{chem}}(\hat h)=\|\Gamma_D(\hat h,c)-d\|_{L^2(\mathcal I)}^2+\beta\|\Gamma_C(\hat h,d)-c\|_{L^2(\mathcal I)}^2.
\end{equation}
Because carbonate equilibrium is nonlinear, low pH RMSE does not guarantee low DIC or \(p\mathrm{CO}_2\) error. The reconstructed pH must therefore be checked inside the carbonate system.

\textbf{Active vs. passive tracers: physical consistency.}
Figure~\ref{fig:active}~(c) shows the corresponding conservation mismatch: the source-free conserved inventory is DIC, not pH. Under no-flux boundaries, source-free DIC transport should satisfy \(M[d](t)=M[d](t_0)\). Therefore, for a flow-only pH prediction \(\hat h_{\mathrm{flow}}\), the relevant constraint is \(M[\Gamma_D(\hat h_{\mathrm{flow}},c)](t)=M[\Gamma_D(\hat h_{\mathrm{flow}},c)](t_0)\), not conservation of pH or \(H^+\). The full carbonate system may change through biology, air--sea exchange, vertical mixing, and unresolved reactions, but transport alone should not create or destroy carbon.

\textbf{Reconstruction problem.}
Given sparse pH observations \(\mathcal O_h\) on \(\mathcal I\), velocity \(u\), reference \(p\mathrm{CO}_2\) \(c\), and reference DIC \(d\), we reconstruct \(h\) by minimizing \(\mathcal L_{\mathrm{pH}}+\lambda_{\mathrm{chem}}\mathcal E_{\mathrm{chem}}+\lambda_{\mathrm{phys}}\mathcal E_{\mathrm{phys}}\), where \(\mathcal L_{\mathrm{pH}}=\|\hat h-h\|_{L^2(\mathcal I)}^2\) and \(\mathcal E_{\mathrm{phys}}=\int_{t_0}^{T}|M[\Gamma_D(\hat h_{\mathrm{flow}},c)](t)-M[\Gamma_D(\hat h_{\mathrm{flow}},c)](t_0)|^2dt\). This formulation separates the predicted variable from the conserved inventory: pH is reconstructed, but the carbon state implied by pH and \(p\mathrm{CO}_2\) must remain chemically and physically valid.

\section{Methodology}

\subsection{A Naive Passive-Tracer Baseline for pH Reconstruction}

\textbf{Naive passive pH reconstruction.}
A natural baseline is to treat pH as a transported scalar. Let $F_0$ be a data-driven initializer and let $\xi$ denote auxiliary fields. Given sparse pH observations $\mathcal O_h$, velocity $u$, reference $p\mathrm{CO}_2$ $c$, and reference DIC $d$, the model first estimates the initial field by $\hat h(\cdot,t_0)=F_0(\mathcal O_h,u,c,d,\xi)$. It then evolves $\hat h$ by the passive advection--diffusion equation $\dot{\hat h}+\nabla\cdot(u\hat h)-\nabla\cdot(K\nabla\hat h)=s_h$, where $K$ and $s_h$ are learned. The model is trained only by scalar pH error, $\mathcal L_{\mathrm{pH}}=\|\hat h-h\|_2^2$. This baseline is stronger than frame-wise inpainting because it uses temporal propagation, velocity fields, and a PDE-based inductive bias.

{\textbf{Why it fails.}
The failure is structural: the model transports pH, but source-free carbon conservation is about DIC. Let \(\hat h_{\mathrm{flow}}\) be the flow-only prediction and \(D_{\mathrm{flow}}=\Gamma_D(\hat h_{\mathrm{flow}},c)\) be its implied DIC. Even if passive transport conserves the transported scalar, \(M[\hat h_{\mathrm{flow}}](t)=M[\hat h_{\mathrm{flow}}](t_0)\) does not imply \(M[D_{\mathrm{flow}}](t)=M[D_{\mathrm{flow}}](t_0)\). Thus the passive baseline can produce smooth, low-RMSE pH fields while still violating carbonate closure and the DIC budget. Appendix~\ref{app:passive-counterexample} makes this failure formal: even in a source-free, no-flux setting, passive pH transport conserves \(M[\hat h_{\mathrm{flow}}]\) but generically does not conserve \(M[\Gamma_D(\hat h_{\mathrm{flow}},c)]\), because the carbonate map \(\Gamma_D(\cdot,c)\) is nonlinear.}

\begin{figure}
\centering
\captionsetup{font=footnotesize,skip=2pt}
\includegraphics[width=\linewidth]{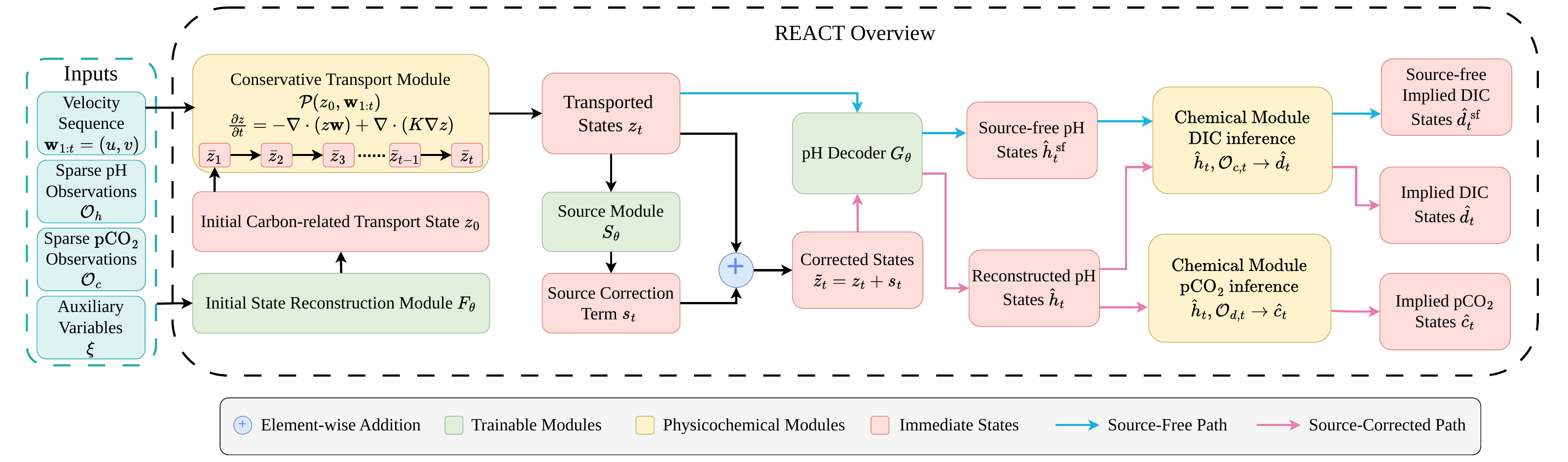}
\caption{\small\textbf{Overview of REACT}. REACT initializes a carbon-related state from sparse observations and physical drivers, propagates it through a conservative transport module, adds learned active source corrections, and decodes the corrected state into pH. The reconstructed pH is checked by carbonate chemistry, while the source-free branch is constrained to conserve represented DIC mass.}
\label{fig:structure}
\end{figure}

\subsection{REACT: \underline{RE}constructing \underline{A}ctive \underline{C}arbonate \underline{T}racers}\label{sec::react}

\textbf{Model overview.}
We propose \textbf{REACT}, short for \textbf{RE}constructing \textbf{A}ctive \textbf{C}arbonate \textbf{T}racers. As shown in Figure~\ref{fig:structure}, REACT follows a carbon-first pipeline. It takes velocity sequences, sparse pH observations, sparse \(p\mathrm{CO}_2\) observations, and auxiliary variables as inputs. It first reconstructs an initial carbon-related transport state \(z_0\), propagates it by a conservative PDE to obtain transported states \(z_t\), adds an active source correction \(s_t\) to form corrected states \(\tilde z_t=z_t+s_t\), and decodes the corrected states into reconstructed pH \(\hat h_t\). A chemical module then maps \(\hat h_t\) back to implied DIC through carbonate equilibrium. The key difference from passive-tracer reconstruction is therefore not the use of a PDE, but what happens around the PDE: the transported state is carbon-related, the residual is an active carbon-cycle source, and the output is checked by carbonate chemistry. 

\textbf{Latent carbonate-state transport and active correction.}
REACT consists of three coupled components. (1) \textit{Latent carbonate-state initialization}: REACT estimates an initial latent carbonate state \(z_0=F_\theta(\mathcal O_h,\mathcal O_c,u,\xi)\), where \(\mathcal O_h\) and \(\mathcal O_c\) are sparse pH and \(p\mathrm{CO}_2\) observations, and \(\xi\) denotes auxiliary fields. The state $z_0$ is not a pH image or raw DIC field; it is a latent coordinate on which conservative carbonate transport is imposed. (2) \textit{Conservative transport}: starting from \(z(\cdot,t_0)=z_0\), REACT evolves \(z\) by \(\partial_t z+\nabla\cdot(uz)-\nabla\cdot(K_\theta\nabla z)=0\), where \(K_\theta\) is learned and \(z_t=z(\cdot,t)\). This gives the source-free branch for conservative redistribution. (3) \textit{Active correction and pH decoding}: because the surface carbonate system also changes through air--sea exchange, biological activity, vertical mixing, and unresolved reactions, REACT predicts an active correction \(s_t=S_\theta(z_t,u,c,\xi)\), forms \(\tilde z_t=z_t+s_t\), and decodes pH by \(\hat h_t=G_\theta(\tilde z_t,c,\xi)\). Thus the PDE branch explains conservative transport, while the source module explains non-conservative carbonate-system changes.

\textbf{Represented DIC and source-free conservation.}
Since \(z_t\) may be latent, we do not require it to equal DIC in raw units. We require only that it has a DIC readout after decoding and carbonate equilibrium, defined by \(\mathcal D_\theta(z_t,c_t)=\Gamma_D(G_\theta(z_t,c_t),c_t)\). In a DIC-coordinate implementation, \(\mathcal D_\theta(z_t,c_t)\) reduces to \(z_t\). The corrected state \(\tilde z_t\) need not conserve mass because it contains active sources, but the source-free branch \(z_t\) should conserve represented DIC. We therefore impose \(\mathcal L_{\mathrm{phys}}=\int_{t_0}^{T}|M[\mathcal D_\theta(z_t,c_t)]-M[\mathcal D_\theta(z_0,c_{t_0})]|^2dt\). This placement is crucial: conservation before the source module forces the PDE branch to carry conservative carbon transport, instead of allowing the source module to hide transport errors.

\textbf{Training objective.}
REACT is trained with pH supervision, carbonate closure, and source-free DIC conservation. The pH loss is \(\mathcal L_{\mathrm{pH}}=\|\hat h-h\|_{L^2(\mathcal I)}^2\), and the chemical loss is \(\mathcal L_{\mathrm{chem}}=\mathcal E_{\mathrm{chem}}(\hat h)\), which checks whether reconstructed pH implies the correct DIC when paired with \(p\mathrm{CO}_2\), and the correct \(p\mathrm{CO}_2\) when paired with DIC. The total loss is \(\mathcal L_{\mathrm{total}}=\mathcal L_{\mathrm{pH}}+\lambda_{\mathrm{chem}}\mathcal L_{\mathrm{chem}}+\lambda_{\mathrm{phys}}\mathcal L_{\mathrm{phys}}\). When the losses are small, the prediction is not only accurate in pH, but also chemically closed and physically conservative in the source-free branch.

\textbf{Advantage over the passive baseline.}
REACT follows the PDE-plus-correction principle used in  models such as NUTS~\citep{nuts}: both use dynamical propagation and learned residual modules rather than frame-wise inpainting. The difference is the tracer. For passive nutrients, the reconstructed variable is also the transported inventory, so PDE propagation can act directly on the target field. For carbonate reconstruction, the target is pH but the conserved inventory is DIC. A passive pH model therefore has the right dynamical tool but the wrong state variable. REACT keeps the useful part of PDE-based reconstruction while changing the transported state from pH to represented carbon, and adds carbonate closure to ensure that the decoded pH remains chemically meaningful.

\textbf{Relation to existing reconstruction models.}
Existing reconstruction methods can be viewed along two axes, with carbonate-aware reconstruction exposing a third missing axis. Frame-wise interpolation or inpainting ignores transport and treats each map independently~\citep{conkright2002world-OI,ronneberger2015unet,vit,zhao2021large-CoModGAN,repaint}. PDE-based reconstruction introduces temporal dynamics, but usually assumes the reconstructed variable is itself the transported tracer~\citep{courtier1994strategy-4D-var,nerger2008improving-KalmanFilter,brasseur1991application-VIM,fno,marble,luo2024inr,beauchamp20234dvarnet,nuts}. These methods do not directly address pH as an active carbonate-system variable: chemical closure is not part of the reconstruction target, and conservative transport is not explicitly separated from active carbon-cycle sources. REACT takes one step further: it places the PDE on a latent carbon state and evaluates decoded pH through carbonate closure.

\subsection{Theoretical Analysis: Why REACT Uses a Carbon State}
\label{sec:react-theory}

The theoretical question is whether the hidden state \(z_t\) carries a physical certificate. REACT does not require \(z_t\) to equal raw DIC, since a latent model may represent the same carbon inventory in another coordinate system. What matters is the DIC readout induced by the decoder and carbonate chemistry: \(\mathcal D_\theta(z_t,c_t)=\Gamma_D(G_\theta(z_t,c_t),c_t)\). We therefore interpret \(z_t\) as a carbon coordinate and \(\mathcal D_\theta(z_t,c_t)\) as its represented DIC.


\textbf{Theorem 1 (Informal; carbon-state certificate).}
Under the standard identifiability and realizability conditions stated in Appendix~\ref{app:react-theory}, REACT gives the following certificate. If \(\mathcal L_{\mathrm{chem}}=0\) and \(\mathcal L_{\mathrm{phys}}=0\), then the reconstructed pH closes carbonate chemistry, i.e., \(\Gamma_D(\hat h,c)=d\) and \(\Gamma_C(\hat h,d)=c\), and the source-free branch conserves represented DIC mass, i.e., \(M[\mathcal D_\theta(z,c)](t)=M[\mathcal D_\theta(z,c)](t_0)\). Moreover, when the true process decomposes into source-free DIC transport plus an active carbon residual, REACT can realize zero chemical and physical loss. A passive pH-transport model cannot generally satisfy the same certificate: even if it conserves source-free pH mass, it can change the DIC mass implied by carbonate chemistry.

\textbf{Interpretation.}
The result explains why REACT transports a carbon state rather than pH itself. For passive tracers, the prediction target is also the conserved inventory. For carbonate tracers, this identity breaks: pH is the target, but DIC is the conserved inventory. Since the carbonate map \(D(h)=\Gamma_D(h,c)\) is nonlinear, conserving pH does not generally conserve DIC. REACT resolves this mismatch by placing the PDE on a carbon state, assigning active carbon-cycle changes to the source module, and decoding pH through carbonate closure. The full proof and the passive-baseline counterexample are given in Appendix~\ref{app:react-theory}.

\section{Experiment}

In this section, we evaluate REACT through three questions:

\textbf{RQ1.} Does REACT improve global sea surface pH reconstruction across temporal scales and real observations? Does it also improve carbonate-system consistency?

\textbf{RQ2.} Do the mass and chemical constraints make the reconstruction physically and chemically valid? How much do they reduce DIC drift and carbonate error?

\textbf{RQ3.} How do different design choices, such as auxiliary variable inputs, model depth, and reconstruction window length, affect model performance?

\subsection{Experimental Setup}
In this subsection, we describe the datasets, tasks, baselines and metrics. Additional details are provided in Appendix~\ref{app:experiment_details}.

\textbf{Data.}
We use two types of datasets in this work: a gridded simulation dataset derived from MOM6--COBALTv2~\citep{MOM6} and a sparse observational dataset in the World Ocean Database (WOD)~\citep{WOD} and the Surface Ocean CO$_2$ Atlas (SOCAT)~\citep{SOCAT}. 
\textbf{Simulation Dataset.} The simulation data come from daily MOM6--COBALTv2 outputs from 1959 to 2022. All variables are regridded to a uniform $1^\circ \times 1^\circ$ global grid using bilinear interpolation, resulting in a spatial resolution of $180 \times 360$. We aggregate the daily outputs into weekly and monthly products, yielding two cross-scale tasks: month-to-week and week-to-day reconstruction. The dataset provides sea surface pH as the reconstruction target, together with nine auxiliary variables used as model inputs: sea surface CO$_2$ partial pressure (\(p\mathrm{CO}_2^s\)), atmospheric CO$_2$ partial pressure (\(p\mathrm{CO}_2^a\)), chlorophyll concentration (Chl.), sea surface height (SSH), mixed layer depth (MLD), salinity (Salt), temperature (Temp.), and horizontal velocity components ($u$ and $v$). 
\textbf{Observational data.} The pH observations are from the WOD, while \(p\mathrm{CO}_2^s\) observations are from the SOCAT, both  from 1982 to 2022. The records are highly sparse: valid entries cover only \(0.11\%\) of the pH grid and \(1.67\%\) of the \(p\mathrm{CO}_2^s\) grid. All observations are mapped onto the same spatiotemporal grid as MOM6.

\textbf{Tasks.}
We evaluate the proposed model on two types of tasks: simulation-based and observation-based reconstruction.
\textbf{Simulation-based reconstruction.}
We consider two cross-temporal-scale settings, monthly-to-weekly and weekly-to-daily sea surface pH reconstruction. Given sparse coarse-scale pH and \(p\mathrm{CO}_2^s\) observations, together with target-scale auxiliary variables, the model reconstructs the full target-scale pH field. We sample the full MOM6 labels at $0.11\%$, $1\%$, and $10\%$, where $0.11\%$ matches the sparsity of WOD pH observations and $10\%$ follows prior sparse-field reconstruction settings.
\begin{wraptable}{r}{0.52\textwidth}
\vspace{-6pt}
\centering
\tiny
\setlength{\tabcolsep}{2.5pt}
\renewcommand{\arraystretch}{1.05}
\caption{Overview of dataset divisions by year.}
\label{tab:data_split}
\vspace{-4pt}
\resizebox{0.52\textwidth}{!}{
\begin{tabular}{cccc}
\toprule
\textbf{Task} & \textbf{Train} & \textbf{Val.} & \textbf{Test} \\
\midrule
\textbf{Month$\to$Week} & 1971--2010 & 1965--1970; 2011--2016 & 1959--1964; 2017--2022 \\
\textbf{Week$\to$Day}   & 1988--2003 & 1987; 2004 & 1986; 2005 \\
\textbf{WOD}   & 1982--2018 & 2019--2020 & 2021--2022 \\
\bottomrule
\end{tabular}
}
\vspace{-10pt}
\end{wraptable}
\textbf{Observation-based reconstruction.}
We use WOD pH and SOCAT \(p\mathrm{CO}_2^s\) as sparse constraints, together with MOM6 auxiliary variables, to reconstruct sea surface pH under real observational sparsity.
All tasks use chronological splits for training, validation, and testing, as summarized in Table~\ref{tab:data_split}. 

\textbf{Baselines.}
We compare REACT with seven classes of baselines:
(1) spatial interpolation methods, including Kriging with exponential and spherical variogram models~\citep{kriging};
(2) convolutional networks, represented by U-Net~\citep{ronneberger2015unet};
(3) Transformer-based models, represented by ViT~\citep{vit};
(4) neural operators, represented by FNO~\citep{fno};
(5) implicit continuous field reconstruction models, represented by Marble~\citep{marble};
(6) climate foundation models, including Prithvi WxC~\citep{prithvi} and AtmoRep~\citep{atmorep};
(7) physically guided reconstruction models, including 4DVarNet~\citep{beauchamp20234dvarnet} and NUTS~\citep{nuts}.
Kriging is applied independently at each target time slice using only sparse pH observations. All other baselines use the same temporal reconstruction windows, data splits, sampling ratios, and metrics, with sparse observations, auxiliary variables, and flow fields as inputs. Marble uses temporal observations but does not use explicit flow or auxiliary variables. 

\textbf{Metrics.}
We evaluate model performance using four metrics.
\textbf{pH} is the latitude-weighted NRMSE of sea surface pH, where the latitude weighting accounts for unequal grid-cell areas. NRMSE is obtained by normalizing RMSE with the mean value of the corresponding target variable.
\textbf{Chem.} measures carbonate closure, defined as the average of normalized \(p\mathrm{CO}_2^s\) error and normalized DIC error.
\textbf{Drift$_{\mathrm{local}}$} measures short term mass conservation behavior by computing the DIC inventory drift within each reconstruction segment.
\textbf{Drift$_{\mathrm{global}}$} measures long range mass conservation behavior by computing the accumulated DIC inventory drift over continuous long term reconstruction periods.
The detailed computation of all metrics is provided in Appendix~\ref{app:metrics}.

\subsection{Main Results (RQ1)}

\newcommand{\subpm}[1]{$_{\scalebox{0.70}{$\scriptstyle\pm#1$}}$}
\newcommand{\best}[1]{\textbf{#1}}
\newcommand{\second}[1]{\underline{#1}}
\newcommand{\oursrow}{\rowcolor{gray!12}}
\newcommand{\gainrow}{\rowcolor{gray!18}}

\begin{table}[t]
\scriptsize
\centering
\caption{\textbf{Main results.} Sea surface pH reconstruction on Month\(\to\)Week and Week\(\to\)Day tasks under different sampling ratios. pH NRMSE is reported in units of \(10^{-4}\), and Chem. is reported in units of \(10^{-2}\). Best and second-best results are shown in \textbf{bold} and \underline{underline}, respectively. Subscript values denote standard errors over three independent sparse-observation samplings. \textit{Params} denotes model size. \textit{U-Net$^{2}$} denotes a two-level U-Net variant. \textit{Gain} denotes relative improvement over the strongest baseline. Decomposed \(p\mathrm{CO}_2^s\) and DIC errors are reported in Appendix Table~\ref{tab:main_results_pco2_dic}.}
\label{tab:main_results}

\setlength{\tabcolsep}{1.5pt}
\renewcommand{\arraystretch}{1}
\begin{tabular}{c c cc cc cc cc cc cc}
\toprule
\multirow{3}{*}{\textbf{Methods}} &
\multirow{3}{*}{\textbf{Params}} &
\multicolumn{6}{c}{\textbf{Month\(\to\)Week}} &
\multicolumn{6}{c}{\textbf{Week\(\to\)Day}} \\
\cmidrule(lr){3-8}\cmidrule(lr){9-14}
& &
\multicolumn{2}{c}{0.11\%} & \multicolumn{2}{c}{1\%} & \multicolumn{2}{c}{10\%} &
\multicolumn{2}{c}{0.11\%} & \multicolumn{2}{c}{1\%} & \multicolumn{2}{c}{10\%} \\
\cmidrule(lr){3-4}\cmidrule(lr){5-6}\cmidrule(lr){7-8}
\cmidrule(lr){9-10}\cmidrule(lr){11-12}\cmidrule(lr){13-14}
& &
pH & Chem. & pH & Chem. & pH & Chem. &
pH & Chem. & pH & Chem. & pH & Chem. \\
\midrule

Krig-Exp & -- & 50.6\subpm{0.91} & 10.1\subpm{0.34} & 37.0\subpm{0.84} & 7.06\subpm{0.26} & 23.3\subpm{0.72} & 4.56\subpm{0.23} & 51.2\subpm{0.86} & 9.95\subpm{0.29} & 37.1\subpm{0.82} & 6.94\subpm{0.27} & 22.2\subpm{0.68} & 4.22\subpm{0.17} \\
Krig-Sph & -- & 50.9\subpm{0.89} & 10.2\subpm{0.20} & 41.2\subpm{0.88} & 7.83\subpm{0.27} & 33.8\subpm{0.81} & 6.11\subpm{0.28} & 52.2\subpm{0.83} & 10.2\subpm{0.33} & 39.4\subpm{0.84} & 7.35\subpm{0.30} & 26.9\subpm{0.66} & 5.04\subpm{0.19} \\
4D-Var. & 0.27M & 6.92\subpm{0.03} & 2.09\subpm{0.03} & 7.14\subpm{0.04} & \second{2.00\subpm{0.02}} & \second{5.99\subpm{0.06}} & \second{1.90\subpm{0.04}} & 5.93\subpm{0.07} & 1.86\subpm{0.03} & 6.02\subpm{0.05} & 1.88\subpm{0.03} & 7.40\subpm{0.05} & 2.05\subpm{0.04} \\
Marble & 0.83M & 25.7\subpm{0.03} & 5.65\subpm{0.10} & 25.8\subpm{0.03} & 5.71\subpm{0.07} & 25.2\subpm{0.02} & 5.51\subpm{0.07} & 26.9\subpm{0.04} & 5.51\subpm{0.06} & 26.9\subpm{0.02} & 5.55\subpm{0.03} & 27.0\subpm{0.03} & 5.51\subpm{0.05} \\
FNO & 8.42M & 16.7\subpm{0.07} & 3.88\subpm{0.07} & 16.3\subpm{0.04} & 3.81\subpm{0.02} & 16.6\subpm{0.05} & 3.82\subpm{0.02} & 7.34\subpm{0.05} & 2.07\subpm{0.04} & 8.81\subpm{0.06} & 2.29\subpm{0.02} & 8.67\subpm{0.05} & 2.24\subpm{0.02} \\
U-Net & 31.1M & 7.78\subpm{0.04} & 2.17\subpm{0.08} & 7.56\subpm{0.05} & 2.14\subpm{0.01} & 7.40\subpm{0.04} & 2.12\subpm{0.03} & 5.91\subpm{0.06} & 1.86\subpm{0.05} & 5.81\subpm{0.05} & 1.85\subpm{0.03} & 5.80\subpm{0.05} & 1.84\subpm{0.03} \\
U-Net$^{2}$ & 1.96M & 7.71\subpm{0.13} & 2.21\subpm{0.03} & 7.78\subpm{0.09} & 2.22\subpm{0.02} & 7.22\subpm{0.12} & 2.11\subpm{0.03} & 7.21\subpm{0.14} & 2.02\subpm{0.04} & 7.29\subpm{0.12} & 2.06\subpm{0.02} & 6.81\subpm{0.08} & 1.93\subpm{0.02} \\
ViT & 0.11B & 9.56\subpm{0.10} & 2.51\subpm{0.01} & 9.47\subpm{0.04} & 2.54\subpm{0.01} & 9.26\subpm{0.06} & 2.50\subpm{0.01} & 6.31\subpm{0.06} & \second{1.66\subpm{0.03}} & 6.26\subpm{0.04} & 2.18\subpm{0.03} & 5.85\subpm{0.06} & 1.88\subpm{0.02} \\
NUTS & 0.15B & \second{6.74\subpm{0.06}} & \second{1.95\subpm{0.10}} & \second{6.88\subpm{0.08}} & 2.06\subpm{0.12} & 6.87\subpm{0.05} & 2.30\subpm{0.09} & \second{5.08\subpm{0.09}} & 1.69\subpm{0.07} & \second{4.91\subpm{0.07}} & \second{1.65\subpm{0.05}} & \second{5.05\subpm{0.05}} & \second{1.77\subpm{0.04}} \\
AtmoR. & 0.80B & 14.6\subpm{0.16} & 3.52\subpm{0.04} & 16.2\subpm{0.19} & 3.95\subpm{0.04} & 11.0\subpm{0.17} & 2.86\subpm{0.03} & 9.10\subpm{0.12} & 2.35\subpm{0.05} & 7.55\subpm{0.14} & 2.07\subpm{0.03} & 7.60\subpm{0.11} & 2.05\subpm{0.01} \\
Prithvi & 2.37B & 16.0\subpm{0.52} & 3.48\subpm{0.07} & 16.1\subpm{0.47} & 3.45\subpm{0.03} & 14.2\subpm{0.59} & 3.16\subpm{0.05} & 13.3\subpm{0.29} & 2.75\subpm{0.04} & 13.4\subpm{0.36} & 2.75\subpm{0.02} & 12.8\subpm{0.44} & 2.73\subpm{0.07} \\

\midrule
\oursrow
\textbf{REACT} & 0.18B &
\best{5.84\subpm{0.04}} & \best{1.59\subpm{0.04}} &
\best{5.87\subpm{0.03}} & \best{1.52\subpm{0.02}} &
\best{5.81\subpm{0.02}} & \best{1.54\subpm{0.03}} &
\best{4.64\subpm{0.04}} & \best{1.35\subpm{0.03}} &
\best{4.62\subpm{0.03}} & \best{1.34\subpm{0.02}} &
\best{4.54\subpm{0.02}} & \best{1.32\subpm{0.02}} \\
\gainrow
\textbf{\textit{Gain}} & -- &
13.4\% & 18.5\% &
14.7\% & 24.0\% &
3.00\% & 18.9\% &
8.66\% & 18.7\% &
5.90\% & 18.8\% &
10.1\% & 25.4\% \\
\bottomrule
\end{tabular}%
\vspace{-0.3cm}
\end{table}

In this subsection, we present the main results.

\begin{wrapfigure}{r}{0.40\linewidth}
\vspace{-0.4cm}
\centering
\captionsetup{font=footnotesize,skip=2pt}
\includegraphics[width=0.95\linewidth]
{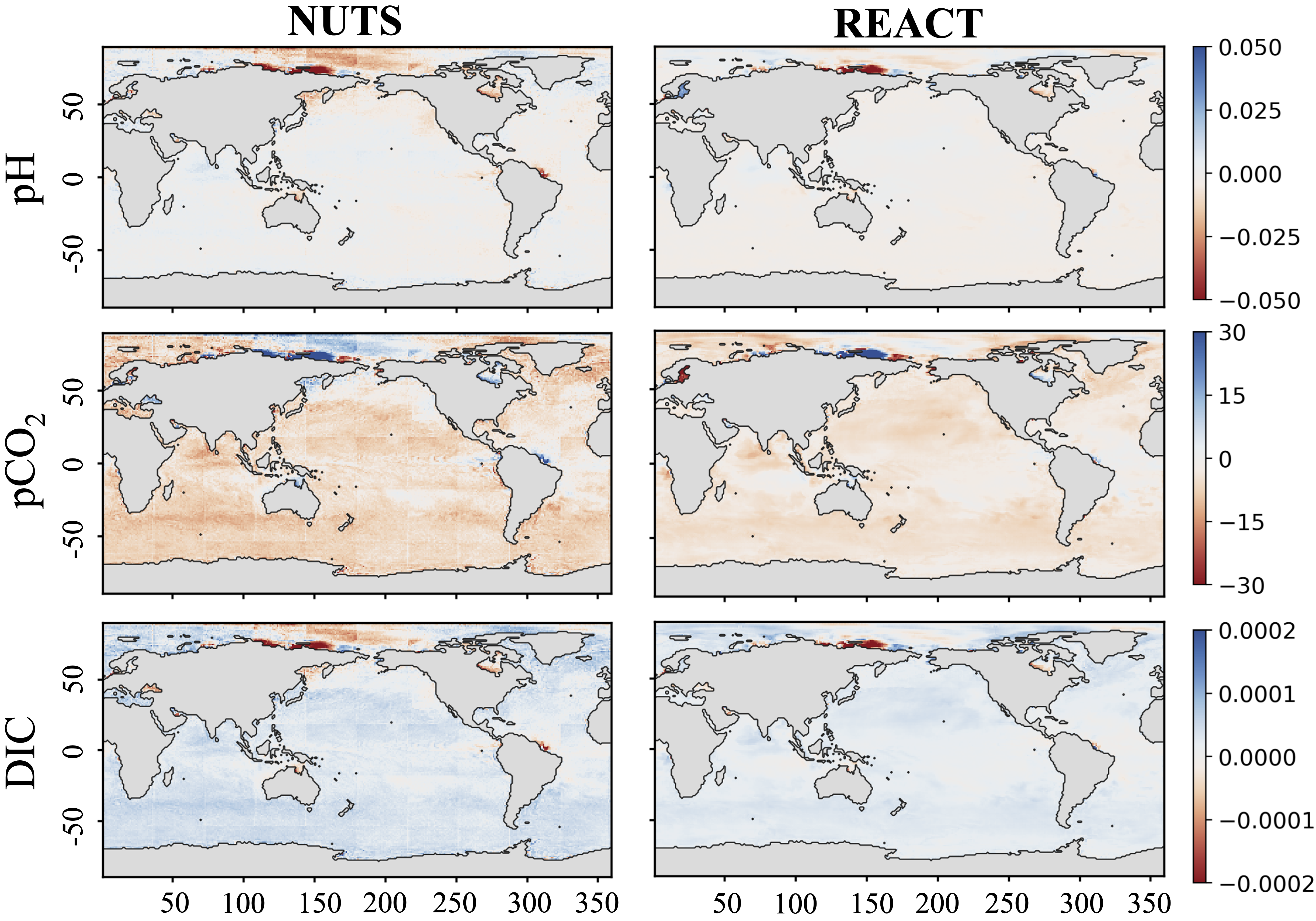}
\vspace{0.2cm}
\caption{\textbf{Qualitative error maps.} Spatial reconstruction errors on MOM6, computed as prediction minus ground truth.  Additional qualitative results are provided in Appendix~\ref{app:additional_qualitative_results}.}
\label{fig:qualitative}
\vspace{-0.4cm}
\end{wrapfigure}

\textbf{Obs 1: REACT improves both pH accuracy and carbonate closure across temporal scales.}
As shown in Table~\ref{tab:main_results}, REACT is the best method on both month-to-week and week-to-day reconstruction, improving both pH NRMSE and Chem.
In terms of pH NRMSE, REACT reduces the error by 3.00\%--14.7\% on monthly-to-weekly reconstruction and by 5.90\%--10.1\% on weekly-to-daily reconstruction.
This suggests that REACT better controls error propagation when refining coarse temporal inputs into finer pH fields.
Figure~\ref{fig:qualitative} further shows smaller spatial errors than NUTS, especially in the Arctic and Pacific regions.
The gains in Chem. are even larger.
REACT improves Chem. by 18.5\%--24.0\% on the monthly-to-weekly task and by 18.7\%--25.4\% on the weekly-to-daily task.
This shows that REACT not only reduces pointwise pH error, but also produces pH fields that are more consistent with the associated \(p\mathrm{CO}_2^s\) and DIC fields.
The gain comes from modeling pH as an active carbonate tracer rather than a standalone scalar field.

\begin{wrapfigure}{r}{0.37\linewidth}
\vspace{-0.5cm}
    \centering
    \captionsetup{font=footnotesize,skip=2pt}

    \includegraphics[width=0.98\linewidth]{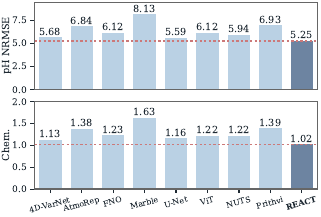}

    \caption{\textbf{Real-observation results.}  pH NRMSE and Chem. measure are reported in units of \(10^{-3}\) and \(10^{-1}\), respectively. Lower is better. Decomposed \(p\mathrm{CO}_2^s\) and DIC results are provided in Appendix~\ref{app:wod_decomposed_results}.}
    \label{fig:wod_real_results}
    \vspace{-1.0em}
\end{wrapfigure}

\textbf{Obs 2: REACT transfers to real sparse observations.}
We evaluate real sparse reconstruction using WOD pH and SOCAT \(p\mathrm{CO}_2^s\). We train on \(75\%\) of observed sites and test on the remaining \(25\%\), which measures generalization under irregular real-world sampling.
As shown in Figure~\ref{fig:wod_real_results}, REACT achieves a pH NRMSE of \(5.25\times10^{-3}\), improving over 4DVarNet, the strongest baseline, which obtains \(5.68\times10^{-3}\).
This corresponds to an error reduction of approximately 7.57\%.
Meanwhile, REACT achieves a Chem. score of 0.102, also outperforming 4DVarNet with 0.113, corresponding to an improvement of approximately 9.73\%.
Thus, the advantage of REACT is not limited to regular simulation grids; it also holds under sparse and irregular real observations.

\subsection{Component Analysis (RQ2)}

We isolate three active-tracer components: source correction, DIC mass conservation and carbonate chemical closure. They test whether REACT can capture non-conservative pH changes, preserve DIC inventory, and close the carbonate system beyond pointwise pH fitting. Table~\ref{tab:module_ablation} reports the ablations, Figure~\ref{figure:mass_conservation_drift} shows DIC drift and Figure~\ref{figure:uncertainty_ph} evaluates uncertainty calibration.

\begin{table*}[t]
\centering
\scriptsize
\setlength{\tabcolsep}{2pt}
\renewcommand{\arraystretch}{1}
\caption{\textbf{Component analysis.} Ablation of the source module, DIC mass constraint, and carbonate chemical constraint. We report results for REACT on both reconstruction tasks and repeat the constraint ablation on U-Net for month-to-week reconstruction at \(0.11\%\) sampling. pH NRMSE is in units of \(10^{-4}\), Chem. is in units of \(10^{-2}\), and drift is reported in per mille. \textit{Div.} denotes divergent long-range drift. Decomposed \(p\mathrm{CO}_2^s\) and DIC errors are provided in Appendix Table~\ref{tab:loss_ablation_pco2_dic}.}

\label{tab:module_ablation}
\begin{tabular}{c cc cccc cccc cccc}
\toprule
&
\multicolumn{2}{c}{} &
\multicolumn{4}{c}{REACT (Month$\to$Week)} &
\multicolumn{4}{c}{REACT (Week$\to$Day)} &
\multicolumn{4}{c}{U-Net (Month$\to$Week)} \\
\cmidrule(lr){4-7}\cmidrule(lr){8-11}\cmidrule(lr){12-15}
Source &
$\mathcal{L}_{\mathrm{mass}}$ &
$\mathcal{L}_{\mathrm{chem}}$ &
pH &
Chem. &
Drift$_{\mathrm{local}}$ &
Drift$_{\mathrm{global}}$ &
pH &
Chem. &
Drift$_{\mathrm{local}}$ &
Drift$_{\mathrm{global}}$ &
pH &
Chem. &
Drift$_{\mathrm{local}}$ &
Drift$_{\mathrm{global}}$ \\
\midrule
--         & --         & --         & 40.8 & Div. & -- & -- & 11.8 & 2.73 & -- & -- & 7.74 & 2.12 & -- & -- \\
\checkmark & --         & --         & 5.79 & 1.93 & 8.84\textperthousand & Div. & 4.03 & 1.65 & 1.11\textperthousand & Div. & 7.75 & 2.14 & 21.2\textperthousand & 17.3\textperthousand \\
\checkmark & \checkmark & --         & 5.68 & 1.94 & 2.26\textperthousand & 9.73\textperthousand & 4.27 & 1.69 & 0.57\textperthousand & 7.41\textperthousand & 7.78 & 2.03 & 4.24\textperthousand & 3.34\textperthousand \\
\checkmark & \checkmark & \checkmark & 5.84 & 1.59 & 2.21\textperthousand & 10.3\textperthousand & 4.64 & 1.35 & 0.63\textperthousand & 7.68\textperthousand & 7.92 & 1.65 & 5.53\textperthousand & 4.88\textperthousand \\
\bottomrule
\end{tabular}
\vspace{-0.3cm}
\end{table*}

\begin{wrapfigure}[16]{r}{0.35\linewidth}
\vspace{-0.3cm}
    \centering
    \captionsetup{font=footnotesize,skip=2pt}
    \includegraphics[width=0.98\linewidth]{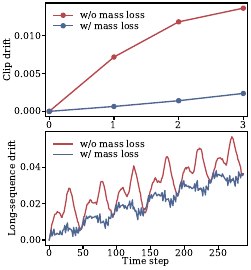}
    \caption{Effect of mass-conservation training on DIC drift.}
    \label{figure:mass_conservation_drift}

\end{wrapfigure}

\textbf{Obs 3: The source module is necessary for accurate pH reconstruction.}
Table~\ref{tab:module_ablation} shows that physical propagation alone is insufficient: it captures flow-driven transport but misses non-conservative effects from air--sea exchange, biology, and local model error.
With the source term correction module, the model can compensate for residual errors in the propagated intermediate field, leading to a clear reduction in pH error.
On month-to-week reconstruction at \(0.11\%\) sampling, the pH error drops from \(4.08\times10^{-3}\) to \(5.79\times10^{-4}\). This result suggests that sea surface pH reconstruction needs active correction beyond conservative transport.

\textbf{Obs 4: The mass constraint suppresses DIC drift without hurting pH accuracy.}
As shown in Table~\ref{tab:module_ablation} and Figure~\ref{figure:mass_conservation_drift}, since pH is not a mass-conservative variable, directly computing inventory drift on the pH field does not have a clear physical interpretation. Therefore, we evaluate mass conservation on the DIC field implied by carbonate chemistry. On month-to-week reconstruction, the mass constraint reduces local drift from $8.84$\textperthousand{} to $2.26$\textperthousand, and turns global drift from divergence into $9.73$\textperthousand. Meanwhile, the pH NRMSE slightly decreases from $5.79\times10^{-4}$ to $5.68\times10^{-4}$. A consistent pattern is observed in the U-Net ablation, where adding the mass constraint reduces local drift from $21.2$\textperthousand{} to $4.24$\textperthousand and global drift from $17.3$\textperthousand{} to $3.34$\textperthousand. This confirms that the constraint improves physical validity without sacrificing pH accuracy.

\begin{wrapfigure}{r}{0.35\linewidth}
\vspace{-0.4cm}
    \centering
    \captionsetup{font=footnotesize,skip=2pt}
    \includegraphics[
        width=0.98\linewidth,
        trim={0 20pt 0 20pt},
        clip
    ]{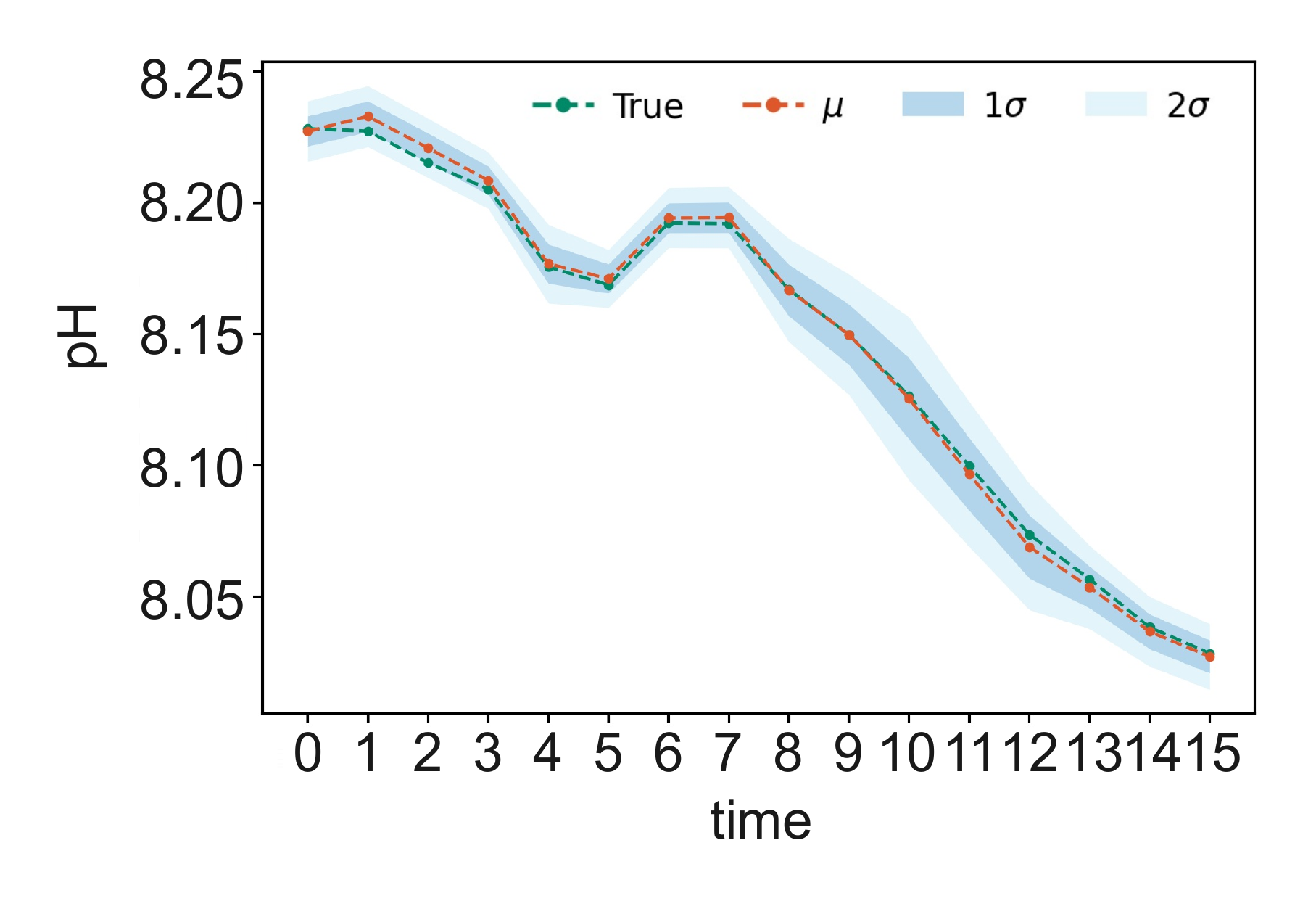}
    \caption{Uncertainty estimation of REACT.}
    \vspace{-0.3cm}
    \label{figure:uncertainty_ph}
\end{wrapfigure}
\textbf{Obs 5: The chemical constraint improves carbonate closure with a small accuracy trade-off.}
Table~\ref{tab:module_ablation} shows that optimizing pH alone can give accurate but chemically inconsistent predictions.
After introducing the chemical module, the overall chemical consistency metric improves noticeably, indicating improved consistency between the predicted pH field and the carbonate system. 
For example, on month-to-week reconstruction, the chemical metric improves from \(1.94\times10^{-2}\) to \(1.59\times10^{-2}\), while pH error changes only slightly from \(5.68\times10^{-4}\) to \(5.84\times10^{-4}\). The U-Net ablation exhibits the same behavior, with the chemical constraint improving Chem. from \(2.03\times10^{-2}\) to \(1.65\times10^{-2}\), accompanied by only a small increase in pH NRMSE. This suggests that the chemical constraint  trades a small amount of pointwise accuracy for a more physically valid carbonate state.

\textbf{Obs 6: REACT provides calibrated predictive uncertainty.}
Following ClimODE~\cite{verma2024climode}, we add a lightweight uncertainty head on top of the converged deterministic REACT model to estimate the predictive standard deviation, while keeping the predictive mean fixed. As shown in Figure~\ref{figure:uncertainty_ph}, the resulting uncertainty band covers most ground-truth pH trajectories and remains concentrated around the deterministic prediction. The selected checkpoint achieves two-standard-deviation coverage values of \(0.963\), \(0.971\), and \(0.958\) for pH, \(p\mathrm{CO}_2^s\), and DIC, respectively, close to the nominal Gaussian coverage of \(0.95\). This suggests that REACT can provide calibrated predictive uncertainty in addition to accurate deterministic reconstruction. Results for \(p\mathrm{CO}_2^s\) and DIC, together with training details, are provided in Appendix~\ref{app:uncertainty_results}.

\subsection{Ablation Study on Design Choices (RQ3)}

\begin{table*}[t]
\centering
\scriptsize
\setlength{\tabcolsep}{1.75pt}
\renewcommand{\arraystretch}{1.1}
\caption{\textbf{Ablation Study}. (a) Evaluation of reconstruction interval length; (b) analysis of state initializer depth; (c) analysis of pH decoder/source-module depth; (d) comparison of state initializer architectures; (e) quantification of the impact of auxiliary input variables. All experiments follow the default setting: Conducted on the Monthly-to-Weekly task under sampling ratio of $0.11\%$, using a 12-layer ViViT~\citep{vivit} as state initializer, pH decoder/source module of depth 4 and interval length set to 4 and 2 for Monthly-to-Weekly and Weekly-to-Daily, respectively. Details are provided in Table~\ref{tab:monthly_weekly_ablations_pco2_dic} and Table~\ref{tab:weekly_daily_ablation_full}}
\label{tab:ablations_choices}

\newlength{\gridsep}
\setlength{\gridsep}{6pt} 

\newlength{\gridcol}
\setlength{\gridcol}{\dimexpr(\textwidth-2\gridsep)/3\relax}

\begin{tabular}{@{}p{\gridcol}@{\hspace{\gridsep}}p{\gridcol}@{\hspace{\gridsep}}p{\gridcol}@{}}

\begin{minipage}[t]{\linewidth}\centering
{\footnotesize\textbf{(a) Interval Length.}}\\[3pt]
\begin{tabular}{c cccc}
\toprule
case & pH & Chem. & Drift$_{\mathrm{local}}$ & Drift$_{\mathrm{global}}$ \\
\midrule
1 & 5.85 & 1.57 & 2.10\textperthousand  & Div.  \\
2 & 6.52 & 1.51 & 2.90\textperthousand  & Div.  \\
4 & 5.84 & 1.59& 2.21\textperthousand  & 10.3\textperthousand  \\
8 & 6.09 & 1.58 & 2.44\textperthousand  & 10.2\textperthousand  \\
\bottomrule
\end{tabular}
\end{minipage}
&
\begin{minipage}[t]{\linewidth}\centering
{\scriptsize\textbf{(b) Depth of State Initializer.}}\\[3pt]
\begin{tabular}{c cccc}
\toprule
depth & pH & Chem. & Drift$_{\mathrm{local}}$ & Drift$_{\mathrm{global}}$ \\
\midrule
4  & 6.73 & 1.59 & 2.98\textperthousand  & 11.0\textperthousand  \\
6  & 6.16 & 1.60 & 2.30\textperthousand  & 8.62\textperthousand  \\
12 & 5.84 & 1.59 & 2.21\textperthousand  & 10.3\textperthousand  \\
24 & 5.98 & 1.59 & 8.75\textperthousand  & 19.5\textperthousand  \\
\bottomrule
\end{tabular}
\end{minipage}
&
\begin{minipage}[t]{\linewidth}\centering
{\scriptsize\textbf{(c) Depth of pH Decoder/Source Module.}}\\[3pt]
\begin{tabular}{c cccc}
\toprule
depth & pH & Chem. & Drift$_{\mathrm{local}}$ & Drift$_{\mathrm{global}}$ \\
\midrule
2 & 5.75 & 1.54 & 8.41\textperthousand  & 19.1\textperthousand  \\
3 & 5.88 & 1.54 & 7.91\textperthousand  & 18.4\textperthousand  \\
4 & 5.84 & 1.59 & 2.21\textperthousand  & 10.3\textperthousand \\
5 & 7.51 & 1.85 & 2.75\textperthousand  & 11.4\textperthousand \\
\bottomrule
\end{tabular}
\end{minipage}
\\[63pt]

\begin{minipage}[t]{\linewidth}\centering
{\footnotesize\textbf{(d) State Initializer Structure.}}\\[3pt]
\begin{tabular}{c cccc}
\toprule
case & pH & Chem. & Drift$_{\mathrm{local}}$ & Drift$_{\mathrm{global}}$ \\
\midrule
ViT    & 6.58 & 1.53 & 3.74\textperthousand & Div. \\
2D CNN & 5.96 & 1.67 & 8.81\textperthousand & 537\textperthousand \\
3D CNN & 36.6 & 462 & 8.44\textperthousand & Div. \\
ViViT  & 5.84 & 1.59 & 2.21\textperthousand & 10.3\textperthousand \\
\bottomrule
\end{tabular}
\end{minipage}
&
\multicolumn{2}{p{\dimexpr2\gridcol+\gridsep\relax}}{%
\begin{minipage}[t]{\linewidth}\centering
{\footnotesize\textbf{(e) Auxiliary Variables.}}\\[3pt]

{\setlength{\tabcolsep}{2.5pt}\renewcommand{\arraystretch}{0.96}
\begin{tabular}{c c ccccccccc}
\toprule
Task & Metric & \(p\mathrm{CO}_2^s\) & \(p\mathrm{CO}_2^a\) & Chl. & SSH & MLD & Salt & Temp. & $u$ & $v$ \\
\midrule
\multirow{2}{*}{Monthly to Weekly} 
 & pH & 21.7 & 6.98 & 5.97 & 6.18 & 5.93 & 5.95 & 6.05 & 6.17 & 6.32 \\
 & Chem. & 4.97 & 1.55 & 1.73 & 1.58 & 1.57 & 1.70 & 1.61 & 1.60 & 1.73 \\
\midrule
\multirow{2}{*}{Weekly to Daily} 
 & pH & 11.9 & 4.81 & 4.79 & 4.76 & 4.88 & 4.69 & 4.77 & 4.78 & 4.75 \\
 & Chem. & 2.57 & 1.33 & 1.30 & 1.26 & 1.25 & 1.40 & 1.29 & 1.26 & 1.31 \\
\bottomrule
\end{tabular}}
\end{minipage}
}
\\

\end{tabular}
\end{table*}

This section analyzes how different design choices affect model performance, including the reconstruction horizon, spatiotemporal encoder depth, source and chemical mapping module depth, spatiotemporal encoder architecture, and auxiliary variable inputs.
\textbullet~\textbf{Reconstruction Horizon.}
Table~\hyperref[tab:ablations_choices]{\ref*{tab:ablations_choices}a} shows that the reconstruction horizon strongly affects performance. A short horizon can lead to relatively small local mass drift,, but it exhibits instability during long-range propagation, whereas an overly long horizon increases error accumulation and optimization difficulty.
\textbullet~\textbf{Spatiotemporal Encoder Depth.}
Table~\hyperref[tab:ablations_choices]{\ref*{tab:ablations_choices}b} shows that a moderate encoder depth is preferable. A depth of $4$ leads to underfitting, with a pH error of $6.73\times10^{-4}$, whereas a depth of $12$ achieves the best error of $5.84\times10^{-4}$. Further increasing the depth to $24$ does not improve accuracy; instead, it worsens both local and global drift, suggesting that an overly deep encoder may introduce training instability or excessive smoothing.
\textbullet~\textbf{Source and Chemistry Module Depth.}
Table~\hyperref[tab:ablations_choices]{\ref*{tab:ablations_choices}c} shows that the U-Net depth affects both accuracy and conservation. Although a depth of $2$ yields a low pH error of $5.75\times10^{-4}$, its local and global drift increase to $8.41$\textperthousand\, and $19.1$\textperthousand. A depth of $4$ provides a better trade-off, reducing the drift to $2.21$\textperthousand\, and $10.3$\textperthousand\, while maintaining low pH error.
\textbullet~\textbf{Spatiotemporal Encoder Architecture.}
Table~\hyperref[tab:ablations_choices]{\ref*{tab:ablations_choices}d} compares different encoder architectures. The 2D CNN and ViT achieve reasonably low pointwise errors, but exhibit very large global drift, indicating that the lack of explicit temporal modeling leads to severe instability in long-sequence reconstruction. The 3D CNN clearly fails on this task. ViViT achieves the best pH error of $5.84\times10^{-4}$ with lower drift, demonstrating the benefit of explicit spatiotemporal attention.
\textbullet~\textbf{Auxiliary Variable Contribution.}
Table~\hyperref[tab:ablations_choices]{\ref*{tab:ablations_choices}e} shows that \(p\mathrm{CO}_2^s\) is the most important auxiliary variable. Removing \(p\mathrm{CO}_2^s\) increases the pH error from $5.84\times10^{-4}$ to $2.17\times10^{-3}$, indicating its central role in providing carbonate system constraints. Other variables also contribute to reconstruction performance.

\section{Conclusion}
\label{sec:conclusion}

We presented REACT, an active-tracer reconstruction framework for global sea surface pH that models pH through an underlying carbonate-system state. By combining conservative transport, source correction, and carbonate-equilibrium constraints, REACT achieves the best pH NRMSE and chemical consistency on multiple reconstruction tasks.
Beyond pH, this framework can in principle extend to other marine active tracers. Due to data and computational limitations, we leave the extension to other active tracers to future work, together with the limitation-driven directions discussed in Appendix~\ref{app:limitations}.

\newpage

\bibliographystyle{unsrtnat}
\bibliography{main}


\newpage
\appendix

\section{Related Work}
\label{app:related_work}

\subsection{Oceanographic Data}

The two main sources of oceanographic data are real-world observational data and model-based simulation data.

\paragraph{Observation data.}
Observation data are derived from direct in situ measurements of the ocean and therefore provide the most physically grounded source of supervision for ocean biogeochemical variables. Representative observational data sources include the WOD~\citep{WOD} and the SOCAT~\citep{SOCAT}. WOD is a long-term global archive of high fidelity oceanographic profile observations from diverse observing systems and historical cruises. It contains a wide range of essential ocean variables, including temperature, salinity, and carbonate-system-related measurements such as sea surface pH, making it a valuable resource for characterizing ocean variability over several decades. SOCAT, provides high-quality observations of \(p\mathrm{CO}_2^s\) from ships, drifters, and other surface observing platforms. These datasets offer complementary observational constraints on ocean biogeochemistry and air–sea carbon exchange. However, despite their high measurement fidelity, such observations remain highly sparse and irregular in both space and time because their sampling distributions are constrained by ship tracks, regional observing priorities, platform availability, and temporal measurement frequency.

\paragraph{Simulation data.}
Simulation data are generated by numerical ocean and Earth system models that explicitly couple ocean circulation with marine biogeochemical and carbon-cycle processes. Representative model systems include MOM6-COBALTv2~\citep{adcroft2019gfdl,stock2020ocean}, NEMO-PISCES~\citep{aumont2015pisces}, and GFDL-ESM4~\citep{dunne2020gfdl,stock2020ocean}. MOM6 forms the ocean component of GFDL OM4 model, which is used in GFDL climate and Earth system simulations, while COBALTv2 represents marine biogeochemical processes and ecosystem dynamics within the GFDL Earth system modeling framework. NEMO-PISCES couples the NEMO ocean circulation model with PISCES-v2, a biogeochemical model designed for carbon-cycle and ecosystem studies representing lower trophic levels and the cycles of carbon and major nutrients. GFDL-ESM4 further integrates ocean physics, biogeochemistry, ecosystems, atmosphere, sea ice, and land processes within a coupled carbon and chemistry modeling system. These simulation products provide continuous, gridded, and spatially complete fields of pH, \(p\mathrm{CO}_2^s\), DIC, and associated physical and biogeochemical variables, making them suitable as dense supervision signals and controlled benchmarks for learning-based models. Nevertheless, simulation outputs are not direct observations and may be affected by structural model assumptions, parameterization choices, numerical resolution, and errors in external forcing.



\subsection{Assimilation and Inversion Methods}

Since observed sea surface pH records are sparse, irregular, and unevenly distributed, ocean pH reconstruction has traditionally relied on assimilation and inversion methods to infer unobserved states. These methods estimate continuous ocean fields either by incorporating available measurements into numerical ocean models or by solving constrained optimization problems with prescribed physical, statistical, or smoothness assumptions.

\textbf{Assimilation Methods.}
Data assimilation methods combine observations with numerical model states to obtain estimates that are more consistent with ocean dynamics and biogeochemical processes. Representative approaches include Optimal Interpolation (OI)~\citep{conkright2002world-OI}, which infers missing values through covariance-based statistical weighting of nearby observations; variational methods such as 3D-Var and 4D-Var~\citep{courtier1994strategy-4D-var}, which optimize model states or controls by reducing the mismatch between observations and simulations; and ensemble-based methods such as the Ensemble Kalman Filter (EnKF)~\citep{nerger2008improving-KalmanFilter}, which sequentially updates model states while representing uncertainty through an ensemble. In large-scale ocean studies, these ideas are reflected in systems such as the Biogeochemical Southern Ocean State Estimate (B-SOSE)~\citep{verdy2017data-B-SOSE}, the Regional Ocean Modeling System (ROMS)~\citep{shchepetkin2005regional-ROMS}, MOM6~\citep{MOM6}, and the NASA Ocean Biogeochemical Model (NOBM)~\citep{rousseaux2012climate-ensemble-NOBM}. By exploiting model dynamics and multivariate coupling, these systems can improve the reconstruction of sparsely observed ocean variables. However, they are still constrained by observation scarcity, model structural errors, uncertain error statistics, and the high computational cost of repeated model integration.

\textbf{Inversion Methods.}
Inversion-based methods provide another important route for ocean field reconstruction. Instead of updating a forward model sequentially, they usually cast reconstruction as an optimization problem, seeking ocean states that fit the available observations while satisfying predefined physical, statistical, or regularization constraints. Classical examples include Variational Inverse Models (VIM)~\citep{brasseur1991application-VIM, belgacem2021climatological-VIM}, which reconstruct oceanographic fields by jointly minimizing data mismatch and regularization terms, such as spatial smoothness or dynamical consistency penalties. Compared with full assimilation systems, inversion methods provide a more direct and flexible formulation for reconstructing sparsely observed ocean biogeochemical variables. Nevertheless, their effectiveness depends heavily on the design of the objective function, the choice of regularization, and the prior assumptions imposed on the solution. In regions with extremely sparse observations, they may still have difficulty recovering fine-scale variability or correcting biased spatial structures.

\subsection{Data-Driven Reconstruction Methods}

Due to the heavy computational burden and modeling assumptions of assimilation and inversion methods, data-driven reconstruction approaches have received increasing attention for estimating sparse ocean fields.

\textbf{Supervised Learning from Observable Proxies.}
A common class of data-driven methods learns a mapping from more accessible proxy variables to target ocean variables. These proxies may include sea surface temperature, SSH, Chl., wind fields, or reanalysis-derived physical fields. Such models are usually trained in a supervised manner using observational products, numerical simulations, or remote sensing datasets. Representative examples include Earthformer~\citep{gao2022earthformer}, a transformer architecture designed for spatiotemporal Earth system data. Tree-based ensemble models, such as gcForest~\citep{zhong2025reconstruction-gcForest}, have also been applied to reconstruct ocean fields from satellite and reanalysis inputs. These methods can capture nonlinear relationships between multivariate predictors and target fields, making them useful for large-scale reconstruction. However, because their predictions are mainly driven by learned statistical correlations, they do not explicitly enforce physical or chemical consistency. As a result, their performance may degrade in sparsely sampled regions, unseen dynamical regimes, or distribution shifts beyond the training data.

\textbf{Foundation Models for Ocean Fields.}
Recent progress in large-scale pretraining has further introduced foundation models as a general framework for Earth and ocean field reconstruction. Models such as Prithvi~\citep{prithvi} and AtmoRep~\citep{atmorep} are trained on broad collections of geobiochemical variables across different spatial resolutions and temporal scales, aiming to learn transferable representations for multiple downstream tasks. For ocean reconstruction, such models have the potential to support interpolation, gap filling, forecasting, and cross-variable inference from sparse or heterogeneous inputs. Their scalability and pretraining paradigm make them suitable for general-purpose reconstruction across different ocean variables and regions. However, this line of research remain underexplored because of their heavy demand for large and diverse training datasets, lack of design level guarantees for physical consistency, and poor generalization to sparse, or dynamical complex ocean regimes.

\textbf{General AI Methods for Reconstruction.}
Beyond ocean-specific reconstruction models, general-purpose AI methods have also been explored for recovering sparse geobiochemical fields. Image inpainting approaches, including GAN-based models such as CoModGAN~\citep{zhao2021large-CoModGAN} and EdgeConnect~\citep{nazeri2019edgeconnect}, transformer-based methods such as the Mask-Aware Transformer (MAT)~\citep{li2022mat}, and diffusion-based methods such as RePaint~\citep{repaint}, treat gridded ocean variables as images and infer missing regions from learned spatial structures. These methods can handle large or irregular gaps, but they usually lack explicit constraints required by scientific variables. Neural operators learn mappings between function spaces and can approximate solution operators for dynamical systems and PDEs. Typical examples include Fourier Neural Operator (FNO)~\citep{fno}, which models fields in spectral space, and MeshGraphNets~\citep{pfaff2020learning-MeshGraphNets}, which can operate on irregular spatial domains. Although such methods support flexible prediction across resolutions, they often depend on high-quality simulation data and careful training. Implicit Neural Representations (INRs) model fields as continuous coordinate-based functions, enabling arbitrary-resolution reconstruction from sparse observations~\citep{luo2024inr}. Overall, these approaches provide flexible tools for ocean field reconstruction, but their application in scientific settings is still limited by data scarcity, weak physical consistency, and the multi-scale nature of geobiochemical processes.

\subsection{Physics-Informed Models}

To address the physical inconsistency of purely data-driven methods, recent studies have developed physics-constrained and hybrid learning frameworks for ocean field reconstruction.

\textbf{Physics-constrained and Hybrid Models.}
Physics-constrained models aim to improve the reliability and plausibility of learning-based reconstruction by incorporating physical equations, numerical schemes, or domain knowledge into the model design or training objective. Physics-informed neural networks (PINNs)~\citep{raissi2017physics-PINN} introduce governing equations or partial differential equation constraints directly into the loss function. Hybrid methods such as CLOINet~\citep{cutolo2024cloinet-CLONET} combine OI with deep learning, allowing the model to exploit both statistical interpolation and regime-dependent neural reconstruction. Similarly, 4DVarNet~\citep{beauchamp20234dvarnet} reformulates variational data assimilation as an end-to-end trainable neural framework, preserving an assimilation-inspired structure while learning reconstruction operators from data. Other methods, such as OxyGenerator~\citep{lu2024oxygenerator}, incorporate graph-based representations and oceanographic domain knowledge for long-term ocean field reconstruction. NUTS~\citep{nuts} further reconstructs ocean nutrients by decoupling ocean flows based on homogenization theory and applying mass conservation regularization to improve physical consistency. These hybrid approaches can enhance generalization and produce more plausible reconstructions than unconstrained black-box models. However, they are often sensitive to the specific design of physical constraints, network architectures, and training strategies. Moreover, they require substantial computational resources when applied to high-resolution or long-term reconstruction. More importantly, many existing physics-constrained reconstruction methods mainly focus on transport-dominated variables. However, for passive tracers, physical transport constraints alone are insufficient. There is still a critical lack of comprehensive physical and biochemical considerations to ensure fully consistent reconstructions in complex marine environments.

\section{Theory for Carbon-State Reconstruction}
\label{app:react-theory}

This appendix formalizes the claim in Section~\ref{sec:react-theory}. The result has two parts. The positive part shows that REACT provides a certificate for carbonate closure and source-free DIC conservation when the chemical and physical losses vanish. The negative part shows that a passive pH transport model cannot provide the same certificate in general: conserving the mass of pH does not imply conservation of the DIC mass implied by carbonate chemistry.

\subsection{Notation and setup}

Let \(\Omega\) be a compact smooth two-dimensional Riemannian domain representing the ocean surface, equipped with surface measure \(dx\). The domain may be closed, as on the sphere, or may have a smooth boundary \(\partial\Omega\). When \(\partial\Omega\neq\varnothing\), \(n\) denotes the outward unit normal. Let
\[
\Omega_T=\Omega\times[t_0,T].
\]
For any scalar field \(r:\Omega_T\to\mathbb R\), define its spatial inventory by
\[
M[r](t)=\int_\Omega r(x,t)\,dx.
\]
We use the convention that all equalities between functions hold almost everywhere unless otherwise stated.

Let \(h(x,t)\) denote pH, \(d(x,t)\) denote DIC, and \(c(x,t)\) denote surface \(p\mathrm{CO}_2\). Let \(\Gamma_D\) and \(\Gamma_C\) be the carbonate-equilibrium maps
\[
\Gamma_D(h,c)=\text{DIC implied by pH }h\text{ and }p\mathrm{CO}_2\ c,
\]
and
\[
\Gamma_C(h,d)=p\mathrm{CO}_2\text{ implied by pH }h\text{ and DIC }d.
\]
The chemical residual of a reconstructed pH field \(\hat h\) is
\[
\mathcal L_{\mathrm{chem}}(\hat h)
=
\|\Gamma_D(\hat h,c)-d\|_{L^2(\mathcal I)}^2
+
\beta
\|\Gamma_C(\hat h,d)-c\|_{L^2(\mathcal I)}^2,
\qquad \beta>0,
\]
where \(\mathcal I\subseteq\Omega_T\) is the observed space--time set.

REACT evolves a latent carbonate state \(z\) by a source-free conservative transport equation
\begin{equation}
\label{eq:app-react-transport}
\partial_t z+\nabla\cdot(u z)-\nabla\cdot(K_\theta\nabla z)=0,
\end{equation}
with no-flux boundary condition
\[
(u z-K_\theta\nabla z)\cdot n=0
\qquad\text{on }\partial\Omega.
\]
If \(\partial\Omega=\varnothing\), this boundary condition is omitted. The transported state is then corrected by an active source module,
\[
\tilde z_t=z_t+s_t,
\qquad
s_t=S_\theta(z_t,u,c,\xi),
\]
and decoded into pH by
\[
\hat h_t=G_\theta(\tilde z_t,c_t,\xi_t).
\]
For compactness we suppress \(\xi\) when it is not needed. The represented DIC of a latent state \(w\) is
\begin{equation}
\label{eq:app-represented-dic}
\mathcal D_\theta(w,c)=\Gamma_D(G_\theta(w,c),c).
\end{equation}
The source-free physical loss is
\begin{equation}
\label{eq:app-phys-loss}
\mathcal L_{\mathrm{phys}}
=
\int_{t_0}^{T}
\left|
M[\mathcal D_\theta(z_t,c_t)]
-
M[\mathcal D_\theta(z_{t_0},c_{t_0})]
\right|^2dt.
\end{equation}

\subsection{Assumptions}

We use the following assumptions only to make the theorem precise.

\paragraph{A1. Carbonate consistency and invertibility.}
There exists an admissible interval \(\mathsf H\) of pH values, an admissible interval \(\mathsf C\) of \(p\mathrm{CO}_2\) values, and an admissible interval \(\mathsf D\) of DIC values such that
\[
\Gamma_D:\mathsf H\times\mathsf C\to\mathsf D,
\qquad
\Gamma_C:\mathsf H\times\mathsf D\to\mathsf C
\]
are well-defined. For each fixed \(c\in\mathsf C\), the map
\[
h\mapsto \Gamma_D(h,c)
\]
is one-to-one on \(\mathsf H\). Its inverse in the first argument is denoted by
\[
H(D,c),
\]
so that
\[
\Gamma_D(H(D,c),c)=D.
\]
We also assume carbonate consistency:
\[
\Gamma_C(H(D,c),D)=c.
\]

\paragraph{A2. Observation identifiability.}
The observed set \(\mathcal I\) is sufficient to identify the chemical residuals in the admissible reconstruction class. Concretely, if an admissible \(\hat h\) satisfies
\[
\Gamma_D(\hat h,c)=d
\quad\text{and}\quad
\Gamma_C(\hat h,d)=c
\qquad\text{on }\mathcal I,
\]
then the same identities hold on the target domain under consideration. In the simplest case, \(\mathcal I=\Omega_T\), and this assumption is automatic. Without this assumption, the conclusion below holds on \(\mathcal I\).

\paragraph{A3. Realizability.}
For the realizability part of the theorem, assume the true DIC field decomposes as
\[
d(x,t)=d^0(x,t)+r(x,t),
\]
where \(d^0\) is the source-free transported DIC branch and \(r\) is the active carbon residual. The source-free branch satisfies
\begin{equation}
\label{eq:app-true-dic-transport}
\partial_t d^0+\nabla\cdot(u d^0)-\nabla\cdot(K^\star\nabla d^0)=0,
\end{equation}
with no-flux boundary condition
\[
(u d^0-K^\star\nabla d^0)\cdot n=0.
\]
Assume the model class contains the DIC-coordinate realization
\[
z_t=d^0_t,
\qquad
K_\theta=K^\star,
\qquad
S_\theta(z_t,u,c,\xi)=r_t,
\qquad
G_\theta(w,c)=H(w,c).
\]
This assumption does not require every implementation to use raw DIC as \(z\); it only says that the model class is expressive enough to represent such a carbon-coordinate solution.

\subsection{Positive result: carbon-state certificate}

\begin{theorem}[Carbon-state certificate and realizability]
\label{thm:react-positive}
Suppose A1--A2 hold and \(\beta>0\). Let \(z_t\), \(\tilde z_t\), and \(\hat h_t\) be a REACT trajectory, with represented DIC \(\mathcal D_\theta\) defined by \eqref{eq:app-represented-dic}.

\begin{itemize}
    \item If \(\mathcal L_{\mathrm{chem}}=0\), then the reconstructed pH closes carbonate chemistry:
    \[
    \Gamma_D(\hat h,c)=d
    \qquad\text{and}\qquad
    \Gamma_C(\hat h,d)=c
    \]
    on the target domain identified by \(\mathcal I\). In particular, if \(\mathcal I=\Omega_T\), these equalities hold almost everywhere on \(\Omega_T\).

    \item If \(\mathcal L_{\mathrm{phys}}=0\), then the source-free branch conserves represented DIC mass:
    \[
    M[\mathcal D_\theta(z_t,c_t)]
    =
    M[\mathcal D_\theta(z_{t_0},c_{t_0})]
    \]
    for almost every \(t\in[t_0,T]\). If the map
    \[
    t\mapsto M[\mathcal D_\theta(z_t,c_t)]
    \]
    is continuous, then the equality holds for every \(t\in[t_0,T]\).

    \item If, in addition, A3 holds and the true fields are chemically consistent,
    \[
    d=\Gamma_D(h,c),
    \qquad
    c=\Gamma_C(h,d),
    \]
    then there exists a REACT realization with
    \[
    \hat h=h,
    \qquad
    \mathcal L_{\mathrm{chem}}=0,
    \qquad
    \mathcal L_{\mathrm{phys}}=0.
    \]
    If the pH observations are noiseless, this same realization also gives
    \[
    \mathcal L_{\mathrm{pH}}=0.
    \]
\end{itemize}
\end{theorem}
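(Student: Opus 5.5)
The proof treats the three bullets of Theorem~\ref{thm:react-positive} separately. Each follows from elementary measure-theoretic facts combined with assumptions A1--A3; no PDE estimates are needed.

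\textbf{Chemical closure.} We start from the fact that \(\mathcal L_{\mathrm{chem}}\) is a sum of two nonnegative squared \(L^2(\mathcal I)\) norms, the second weighted by \(\beta>0\). So \(\mathcal L_{\mathrm{chem}}=0\) forces both norms to vanish. Hence \(\Gamma_D(\hat h,c)=d\) and \(\Gamma_C(\hat h,d)=c\) almost everywhere on \(\mathcal I\). Assumption A2 then lifts these identities from \(\mathcal I\) to the target domain; when \(\mathcal I=\Omega_T\) no lifting is required. The only care point is that \(\beta>0\) is essential: with \(\beta=0\) the \(p\mathrm{CO}_2\) closure would not follow.

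\textbf{Represented DIC conservation.} Set \(f(t)=|M[\mathcal D_\theta(z_t,c_t)]-M[\mathcal D_\theta(z_{t_0},c_{t_0})]|^2\). This is a nonnegative measurable function with \(\int_{t_0}^T f\,dt=\mathcal L_{\mathrm{phys}}=0\), so \(f=0\) for almost every \(t\). If \(t\mapsto M[\mathcal D_\theta(z_t,c_t)]\) is continuous, then \(f\) is continuous. A continuous function that vanishes almost everywhere vanishes everywhere, since the set where it is nonzero is open and would otherwise have positive measure. This gives equality for every \(t\).

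\textbf{Realizability.} This is the constructive step and the one requiring the most checking. We take the DIC-coordinate realization from A3: \(z_t=d^0_t\), \(K_\theta=K^\star\), \(s_t=r_t\), \(G_\theta=H\). The verification proceeds in four steps.

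\begin{enumerate}
\item \(z\) solves \eqref{eq:app-react-transport} with the no-flux condition, because \(d^0\) satisfies \eqref{eq:app-true-dic-transport} with \(K^\star=K_\theta\).
\item The corrected state is \(\tilde z_t=d^0_t+r_t=d_t\). Then \(\hat h=H(d,c)\). Since \(d=\Gamma_D(h,c)\) and \(\Gamma_D(\cdot,c)\) is injective by A1, we get \(H(d,c)=h\), so \(\hat h=h\).
\item Chemical closure follows: \(\Gamma_D(\hat h,c)=\Gamma_D(h,c)=d\), and \(\Gamma_C(\hat h,d)=\Gamma_C(h,d)=c\) by the assumed consistency of the true fields. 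Both residuals vanish, so \(\mathcal L_{\mathrm{chem}}=0\). The same conclusion also follows from the A1 identity \(\Gamma_C(H(D,c),D)=c\).
\item For the physical loss, \(\mathcal D_\theta(z_t,c_t)=\Gamma_D(H(d^0_t,c_t),c_t)=d^0_t\). The subtle point here is the need for \(d^0_t\in\mathsf D\) and \(c_t\in\mathsf C\), so that \(H\) is defined. I would state this admissibility explicitly as part of A3 (\(d^0\) takes values in \(\mathsf D\)) or note it as implicit in the model class. Given it, \(M[d^0](t)\) is constant: integrate the transport equation over \(\Omega\) and apply the divergence theorem, where the flux term vanishes by the no-flux condition or is absent on a closed surface. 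This argument assumes enough regularity to differentiate under the integral, for instance weak solutions tested against the constant function \(1\). Therefore \(\mathcal L_{\mathrm{phys}}=0\).
\end{enumerate}

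Finally, with noiseless observations, \(\hat h=h\) on \(\mathcal I\) gives \(\mathcal L_{\mathrm{pH}}=0\).

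I expect the main obstacle to be the range and admissibility bookkeeping in the realizability step. In particular, \(\Gamma_D(H(d^0,c),c)=d^0\) requires the source-free branch \(d^0\) to stay in the admissible DIC interval, not just the full field \(d\). Everything else is routine.
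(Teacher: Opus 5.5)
Your proposal is correct and follows the paper's proof essentially step for step. Closure comes from the two nonnegative weighted $L^2(\mathcal I)$ terms together with A2, conservation from a nonnegative integrand vanishing almost everywhere (upgraded by continuity), and realizability from the A3 DIC-coordinate realization with $\tilde z=d$, $\hat h=H(d,c)=h$, $\mathcal D_\theta(z_t,c_t)=d^0_t$ and the no-flux divergence argument; your extra bookkeeping (that $d^0$ must stay in $\mathsf D$ so that $H(d^0_t,c_t)$ is defined, plus enough regularity to integrate the transport equation) is left implicit in the paper's A3 and is a harmless refinement, not a different route.
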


\begin{proof}
We prove the three claims separately.

\paragraph{1. Chemical closure.}
By definition,
\[
\mathcal L_{\mathrm{chem}}
=
\|\Gamma_D(\hat h,c)-d\|_{L^2(\mathcal I)}^2
+
\beta
\|\Gamma_C(\hat h,d)-c\|_{L^2(\mathcal I)}^2.
\]
Both terms are nonnegative, and \(\beta>0\). Therefore \(\mathcal L_{\mathrm{chem}}=0\) implies
\[
\|\Gamma_D(\hat h,c)-d\|_{L^2(\mathcal I)}^2=0
\]
and
\[
\|\Gamma_C(\hat h,d)-c\|_{L^2(\mathcal I)}^2=0.
\]
Hence
\[
\Gamma_D(\hat h,c)=d
\qquad\text{and}\qquad
\Gamma_C(\hat h,d)=c
\]
almost everywhere on \(\mathcal I\). By the observation identifiability assumption A2, these identities hold on the target domain. If \(\mathcal I=\Omega_T\), this conclusion is immediate without any additional identifiability argument.

\paragraph{2. Source-free represented-DIC conservation.}
Define
\[
m(t)
=
M[\mathcal D_\theta(z_t,c_t)]
-
M[\mathcal D_\theta(z_{t_0},c_{t_0})].
\]
Then
\[
\mathcal L_{\mathrm{phys}}
=
\int_{t_0}^{T}|m(t)|^2dt.
\]
Since \(|m(t)|^2\ge 0\), the equality \(\mathcal L_{\mathrm{phys}}=0\) implies \(m(t)=0\) for almost every \(t\). Thus
\[
M[\mathcal D_\theta(z_t,c_t)]
=
M[\mathcal D_\theta(z_{t_0},c_{t_0})]
\]
for almost every \(t\). If \(m\) is continuous in time, then an almost-everywhere zero continuous function is identically zero, so the equality holds for every \(t\in[t_0,T]\).

\paragraph{3. Realizability.}
Assume A3. Choose the DIC-coordinate realization
\[
z_t=d^0_t,
\qquad
K_\theta=K^\star,
\qquad
S_\theta(z_t,u,c,\xi)=r_t,
\qquad
G_\theta(w,c)=H(w,c).
\]
Then the corrected latent state is
\[
\tilde z_t=z_t+s_t=d^0_t+r_t=d_t.
\]
The decoded pH is therefore
\[
\hat h_t
=
G_\theta(\tilde z_t,c_t)
=
H(d_t,c_t).
\]
Because the true fields are chemically consistent and \(H\) is the inverse of \(h\mapsto\Gamma_D(h,c)\), we have
\[
H(d_t,c_t)=h_t.
\]
Hence \(\hat h=h\). It follows immediately that \(\mathcal L_{\mathrm{pH}}=0\) for noiseless pH observations.

Next, since \(\hat h=h\) and the true fields are chemically consistent,
\[
\Gamma_D(\hat h,c)=\Gamma_D(h,c)=d
\]
and
\[
\Gamma_C(\hat h,d)=\Gamma_C(h,d)=c.
\]
Therefore \(\mathcal L_{\mathrm{chem}}=0\).

Finally, for the source-free branch,
\[
\mathcal D_\theta(z_t,c_t)
=
\Gamma_D(G_\theta(z_t,c_t),c_t)
=
\Gamma_D(H(z_t,c_t),c_t)
=
z_t
=
d^0_t.
\]
Thus the represented DIC of the source-free branch is exactly \(d^0_t\). Integrating the source-free DIC equation \eqref{eq:app-true-dic-transport} over \(\Omega\) gives
\[
\frac{d}{dt}M[d^0](t)
=
-\int_{\partial\Omega}
(u d^0-K^\star\nabla d^0)\cdot n\,dS.
\]
The no-flux boundary condition makes the boundary integral vanish. Therefore
\[
M[d^0](t)=M[d^0](t_0)
\qquad
\text{for all }t\in[t_0,T].
\]
Because \(\mathcal D_\theta(z_t,c_t)=d^0_t\), we obtain
\[
M[\mathcal D_\theta(z_t,c_t)]
=
M[\mathcal D_\theta(z_{t_0},c_{t_0})],
\]
and hence \(\mathcal L_{\mathrm{phys}}=0\). This proves the theorem.
\end{proof}

\subsection{Negative result: passive pH transport is not a DIC-conservation certificate}
\label{app:passive-counterexample}

We now show that the passive pH baseline lacks the certificate above. The issue is not that passive pH transport cannot conserve pH mass. It can. The issue is that pH mass is not the conserved carbonate inventory. The conserved source-free inventory is DIC, and the map from pH to DIC is nonlinear.

First, we record a purely algebraic fact.

\begin{lemma}[pH mass does not determine DIC mass]
\label{lem:mass-nonlinear}
Let \(\gamma:\mathsf H\to\mathbb R\) be a non-affine continuous function, where \(\mathsf H\subset\mathbb R\) is an interval. Then there exist two bounded pH fields \(h_1,h_2:\Omega\to\mathsf H\) such that
\[
M[h_1]=M[h_2],
\]
but
\[
M[\gamma(h_1)]\neq M[\gamma(h_2)].
\]
In particular, when \(\gamma(h)=\Gamma_D(h,c_0)\) for fixed \(c_0\), conservation of pH mass alone does not imply conservation of implied DIC mass.
\end{lemma}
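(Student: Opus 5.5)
The plan is to reduce the statement to a one-dimensional convexity fact, using two-valued fields built on a fixed measurable partition of \(\Omega\).

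First, since \(\gamma\) is continuous but not affine on \(\mathsf H\), there are three points \(a<m<b\) in \(\mathsf H\) with \(m=\lambda a+(1-\lambda)b\), \(\lambda\in(0,1)\), and
\[
\gamma(m)\neq\lambda\gamma(a)+(1-\lambda)\gamma(b).
\]
To see this, suppose instead that every such triple satisfied equality. Then \(\gamma\) would agree on every compact subinterval with the chord through its endpoints, so it would be affine on every compact subinterval. Exhausting \(\mathsf H\) by nested compact subintervals forces a single affine function, which contradicts the hypothesis.

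Second, I would pick a measurable set \(A\subset\Omega\) with \(|A|=\lambda|\Omega|\). Such a set exists because the surface measure \(dx\) on a Riemannian surface is non-atomic, with finite positive total mass \(|\Omega|\) by compactness; the intermediate value property of non-atomic measures then gives \(A\). With this set I would define the two fields
\[
h_1\equiv m,
\qquad
h_2=a\,\mathbf 1_A+b\,\mathbf 1_{\Omega\setminus A}.
\]
Both are bounded and take values in \(\mathsf H\). Their pH masses agree:
\[
M[h_1]=m|\Omega|=(\lambda a+(1-\lambda)b)|\Omega|=M[h_2].
\]
Their implied masses are
\[
M[\gamma(h_1)]=\gamma(m)|\Omega|,
\qquad
M[\gamma(h_2)]=\bigl(\lambda\gamma(a)+(1-\lambda)\gamma(b)\bigr)|\Omega|.
\]
These differ by the choice of the triple and because \(|\Omega|>0\).

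For the ``in particular'' clause, I would apply the construction with \(\gamma=\Gamma_D(\cdot,c_0)\). This requires \(\Gamma_D(\cdot,c_0)\) to be continuous and non-affine on \(\mathsf H\). Continuity is standard for the carbonate system. Non-affinity holds because DIC depends on \(10^{-h}\) through rational expressions in \([H^+]\) at fixed \(p\mathrm{CO}_2\), and such a dependence is strictly nonlinear; alternatively, one can simply take non-affinity as the stated hypothesis. Taking \(h_1\) as the initial state and \(h_2\) as a later state of a pH field with conserved mass then shows that conservation of pH mass does not imply conservation of implied DIC mass.

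I expect the main obstacle to be the first step, extracting a strict chord inequality from non-affinity on a possibly non-compact interval. The compact exhaustion argument above handles this. The measure-theoretic step is routine once non-atomicity of \(dx\) is noted.
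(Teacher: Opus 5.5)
Your proof is correct and follows the paper's argument: the same constant field $h_1\equiv m$ versus the two-valued field $h_2=a\,\mathbf 1_A+b\,\mathbf 1_{\Omega\setminus A}$ with $|A|=\lambda|\Omega|$. You additionally justify two things the paper takes for granted, namely the existence of a strict chord triple (equality for all triples already forces affinity directly, so the compact exhaustion is more than needed but harmless) and the existence of $A$ via non-atomicity of the surface measure.
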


\begin{proof}
Because \(\gamma\) is non-affine, there exist \(a,b\in\mathsf H\) and \(\lambda\in(0,1)\) such that, with
\[
m=\lambda a+(1-\lambda)b,
\]
we have
\[
\gamma(m)\neq \lambda\gamma(a)+(1-\lambda)\gamma(b).
\]
Choose a measurable subset \(A\subset\Omega\) with \(|A|=\lambda|\Omega|\). Define
\[
h_1(x)=m
\]
and
\[
h_2(x)=
\begin{cases}
a, & x\in A,\\
b, & x\in \Omega\setminus A.
\end{cases}
\]
Then
\[
M[h_1]
=
|\Omega|m
=
|\Omega|\{\lambda a+(1-\lambda)b\}
=
M[h_2].
\]
However,
\[
M[\gamma(h_1)]
=
|\Omega|\gamma(m),
\]
whereas
\[
M[\gamma(h_2)]
=
|\Omega|\{\lambda\gamma(a)+(1-\lambda)\gamma(b)\}.
\]
These two quantities are different by construction. Therefore equal pH mass does not determine equal implied DIC mass.
\end{proof}

The next theorem gives a dynamical counterexample for the passive advection--diffusion baseline.

\begin{theorem}[Passive pH transport can conserve pH mass while changing implied DIC mass]
\label{thm:passive-negative}
Fix a constant \(p\mathrm{CO}_2\) value \(c_0\), and define
\[
\gamma(h)=\Gamma_D(h,c_0).
\]
Assume \(\gamma\in C^2(\mathsf H)\) and \(\gamma\) is nonlinear in the sense that there exists \(a\in\mathsf H\) with
\[
\gamma''(a)\neq 0.
\]
Let \(h\) solve the passive pH transport equation
\begin{equation}
\label{eq:app-passive-ph}
\partial_t h+\nabla\cdot(u h)-\nabla\cdot(K\nabla h)=0
\end{equation}
on \(\Omega\), with no-flux boundary condition
\[
u\cdot n=0,
\qquad
K\nabla h\cdot n=0
\qquad
\text{on }\partial\Omega.
\]
Assume \(u\) is divergence-free,
\[
\nabla\cdot u=0,
\]
and \(K(x)\) is symmetric positive definite on some open set \(U\subseteq\Omega\).

Then there exists a smooth nonconstant initial pH field \(h_0:\Omega\to\mathsf H\) such that the corresponding solution satisfies
\[
M[h](t)=M[h](t_0)
\qquad
\text{for all }t,
\]
but
\[
\left.
\frac{d}{dt}
M[\Gamma_D(h,c_0)](t)
\right|_{t=t_0}
\neq 0.
\]
Consequently,
\[
M[\Gamma_D(h,c_0)](t)
\]
is not conserved, even though passive pH mass is conserved.
\end{theorem}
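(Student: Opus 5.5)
Two facts are needed: conservation of pH mass, and a nonzero initial rate of change of the implied DIC mass. Both come from integrating against test functions and using the divergence theorem.

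\emph{pH mass.} Integrate \eqref{eq:app-passive-ph} over \(\Omega\). This gives \(\frac{d}{dt}M[h]=-\int_{\partial\Omega}(uh-K\nabla h)\cdot n\,dS\), and the boundary integral vanishes because \(u\cdot n=0\) and \(K\nabla h\cdot n=0\). If \(\partial\Omega=\varnothing\) there is no boundary term at all. Hence \(M[h](t)=M[h](t_0)\) for all \(t\). This holds for every solution, so no restriction on \(h_0\) is needed for this part.

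\emph{Rate of the implied DIC mass.} For a classical solution, differentiate under the integral: \(\frac{d}{dt}M[\gamma(h)]=\int_\Omega\gamma'(h)\,\partial_t h\,dx\). I substitute \(\partial_t h=-\nabla\cdot(uh)+\nabla\cdot(K\nabla h)\) and handle the two terms separately.

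For the advection term, \(\nabla\cdot u=0\) gives \(\nabla\cdot(uh)=u\cdot\nabla h\). Then \(\gamma'(h)\,u\cdot\nabla h=u\cdot\nabla\gamma(h)=\nabla\cdot(u\gamma(h))\), whose integral vanishes since \(u\cdot n=0\).

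For the diffusion term, integrate by parts once. The boundary term drops because \(K\nabla h\cdot n=0\), leaving
\[
\frac{d}{dt}M[\gamma(h)]=-\int_\Omega \gamma''(h)\,\nabla h\cdot K\nabla h\,dx .
\]
This is the standard entropy-production identity. Evaluating it at \(t=t_0\) means it suffices to find \(h_0\) with \(\int_\Omega\gamma''(h_0)\,\nabla h_0\cdot K\nabla h_0\,dx\neq0\).

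\emph{Construction of \(h_0\).} Since \(\gamma''\) is continuous and \(\gamma''(a)\neq0\), there is an open interval \(J\ni a\) inside \(\mathsf H\) on which \(\gamma''\) has a fixed sign. I will assume \(a\) can be taken interior to \(\mathsf H\); shrinking \(J\) handles the endpoint case.

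Pick a coordinate ball \(B\Subset U\) and a nonzero bump \(\phi\in C_c^\infty(B)\). Set \(h_0=a+\epsilon\phi\), with \(\epsilon>0\) small enough that \(h_0(\Omega)\subset J\). Then \(h_0\) is smooth, nonconstant, and takes values in \(\mathsf H\).

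The integrand is supported in \(B\subset U\). There \(\nabla h_0\cdot K\nabla h_0=\epsilon^2\nabla\phi\cdot K\nabla\phi\ge0\), with strict inequality wherever \(\nabla\phi\neq0\); this set is nonempty and open because \(\phi\) is nonconstant with compact support. Meanwhile \(\gamma''(h_0)\) has a fixed sign on \(B\). So the integral is strictly signed, and \(\frac{d}{dt}M[\Gamma_D(h,c_0)]\big|_{t_0}\neq0\).

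Since a constant function has zero derivative, \(M[\Gamma_D(h,c_0)]\) is not conserved.

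\emph{Main obstacle: regularity.} The delicate step is justifying differentiation under the integral and the integration by parts at \(t=t_0\). This requires a solution that is \(C^1\) in time up to \(t_0\) and \(C^2\) in space, with values staying in \(\mathsf H\), where \(\gamma\) is defined.

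I would invoke standard parabolic theory for smooth, uniformly elliptic \(K\) with smooth compatible data. If \(K\) is only positive definite on \(U\), I would work in the classical smooth setting as an assumption. Staying in \(\mathsf H\) for short times follows from continuity in time. Alternatively, the maximum principle keeps \(h\) in \([\min h_0,\max h_0]\subset J\) for all \(t\).
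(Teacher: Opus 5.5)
Your proposal is correct and takes essentially the same route as the paper. Both arguments conserve $M[h]$ by the divergence theorem and derive $\frac{d}{dt}M[\gamma(h)]=-\int_\Omega\gamma''(h)\,\nabla h^\top K\nabla h\,dx$, using $\nabla\cdot u=0$ to remove the advection term and integrating the diffusion term by parts; both then take $h_0=a+\epsilon\phi$ with $\phi$ supported in $U$ so that $\gamma''(h_0)$ has a fixed sign. Your extra remarks go beyond the paper: you justify differentiating under the integral, you keep $h$ inside $\mathsf H$ for $t$ near $t_0$, and you handle the case where $a$ is an endpoint of $\mathsf H$, all of which the paper leaves implicit.
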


\begin{proof}
First, integrate \eqref{eq:app-passive-ph} over \(\Omega\):
\[
\frac{d}{dt}M[h](t)
=
\int_\Omega \partial_t h\,dx
=
-\int_\Omega \nabla\cdot(u h)\,dx
+
\int_\Omega \nabla\cdot(K\nabla h)\,dx.
\]
By the divergence theorem,
\[
\frac{d}{dt}M[h](t)
=
-\int_{\partial\Omega}u h\cdot n\,dS
+
\int_{\partial\Omega}K\nabla h\cdot n\,dS.
\]
The no-flux boundary conditions imply both boundary terms vanish. Hence
\[
\frac{d}{dt}M[h](t)=0,
\]
so
\[
M[h](t)=M[h](t_0)
\]
for all \(t\).

Now consider the DIC implied by pH and fixed \(p\mathrm{CO}_2\):
\[
D_{\mathrm{pass}}(x,t)=\Gamma_D(h(x,t),c_0)=\gamma(h(x,t)).
\]
Its mass derivative is
\[
\frac{d}{dt}M[D_{\mathrm{pass}}](t)
=
\frac{d}{dt}\int_\Omega \gamma(h(x,t))\,dx
=
\int_\Omega \gamma'(h)\partial_t h\,dx.
\]
Using \eqref{eq:app-passive-ph},
\[
\frac{d}{dt}M[D_{\mathrm{pass}}](t)
=
-\int_\Omega \gamma'(h)\nabla\cdot(u h)\,dx
+
\int_\Omega \gamma'(h)\nabla\cdot(K\nabla h)\,dx.
\]
Because \(\nabla\cdot u=0\), we have \(\nabla\cdot(u h)=u\cdot\nabla h\). Therefore
\[
-\int_\Omega \gamma'(h)\nabla\cdot(u h)\,dx
=
-\int_\Omega \gamma'(h)u\cdot\nabla h\,dx
=
-\int_\Omega u\cdot\nabla \gamma(h)\,dx.
\]
By the divergence theorem,
\[
-\int_\Omega u\cdot\nabla \gamma(h)\,dx
=
-\int_{\partial\Omega}\gamma(h)u\cdot n\,dS
+
\int_\Omega \gamma(h)\nabla\cdot u\,dx
=
0.
\]
For the diffusion term, integration by parts gives
\[
\int_\Omega \gamma'(h)\nabla\cdot(K\nabla h)\,dx
=
\int_{\partial\Omega}\gamma'(h)K\nabla h\cdot n\,dS
-
\int_\Omega \gamma''(h)\nabla h^\top K\nabla h\,dx.
\]
The boundary term vanishes by \(K\nabla h\cdot n=0\). Hence
\begin{equation}
\label{eq:app-dic-derivative}
\frac{d}{dt}M[D_{\mathrm{pass}}](t)
=
-\int_\Omega
\gamma''(h)
\nabla h^\top K\nabla h
\,dx.
\end{equation}

It remains to choose an initial field for which the right-hand side is nonzero at \(t=t_0\). Since \(\gamma''(a)\neq0\), by continuity there exists a small interval \(J\subset\mathsf H\) containing \(a\) such that \(\gamma''\) has a fixed nonzero sign on \(J\). Choose a smooth nonconstant function \(\phi\) supported inside \(U\), and choose \(\varepsilon\neq0\) sufficiently small that
\[
h_0(x)=a+\varepsilon\phi(x)
\]
takes values in \(J\). Then
\[
\nabla h_0=\varepsilon\nabla\phi.
\]
Evaluating \eqref{eq:app-dic-derivative} at \(t=t_0\) yields
\[
\left.
\frac{d}{dt}M[D_{\mathrm{pass}}](t)
\right|_{t=t_0}
=
-\varepsilon^2
\int_\Omega
\gamma''(a+\varepsilon\phi)
\nabla\phi^\top K\nabla\phi
\,dx.
\]
On the support of \(\phi\), \(K\) is positive definite, and \(\nabla\phi\) is nonzero on a set of positive measure because \(\phi\) is nonconstant. Also, \(\gamma''(a+\varepsilon\phi)\) has a fixed nonzero sign. Therefore the integral is nonzero, and so
\[
\left.
\frac{d}{dt}M[D_{\mathrm{pass}}](t)
\right|_{t=t_0}
\neq0.
\]
Thus the implied DIC mass changes at \(t_0\), even though \(M[h]\) is conserved. This proves the claim.
\end{proof}

\paragraph{Remark.}
The diffusion term is the essential mixing mechanism in Theorem~\ref{thm:passive-negative}. In the degenerate case of purely incompressible advection with no diffusion, the full distribution of \(h\) is transported, so \(\int_\Omega \gamma(h)\,dx\) is also conserved for any fixed nonlinear \(\gamma\). That special case does not provide a certificate for passive pH reconstruction in the advection--diffusion setting used by the baseline. Once mixing or coarse-grained diffusion is present, nonlinear carbonate chemistry makes conservation of \(M[h]\) insufficient for conservation of \(M[\Gamma_D(h,c)]\).

\subsection{Consequence for REACT versus passive pH transport}

Theorem~\ref{thm:react-positive} and Theorem~\ref{thm:passive-negative} formalize the modeling distinction. REACT transports a carbon-related state and evaluates the source-free branch through represented DIC,
\[
\mathcal D_\theta(z_t,c_t)=\Gamma_D(G_\theta(z_t,c_t),c_t).
\]
Thus zero physical loss directly certifies conservation of the represented DIC inventory. In contrast, the passive baseline transports pH itself. Even when that baseline exactly conserves
\[
M[h](t),
\]
Theorem~\ref{thm:passive-negative} shows that it can change
\[
M[\Gamma_D(h,c)](t),
\]
which is the chemically implied DIC inventory. Therefore passive pH transport has the right conservative PDE form but applies it to the wrong variable. REACT avoids this mismatch by placing the source-free transport constraint on a carbon coordinate and decoding pH only after conservative carbon transport and active carbon-cycle correction.

\section{Implementation Details of REACT}\label{app::REACT-structure}

\subsection{Architectural Details }

REACT consists of an initial state reconstruction module, a conservative transport module, a source module, a pH decoder and a chemical module.

\textbf{Initial State Reconstruction Module.}
The initial state reconstruction module estimates the initial carbon-related state from sparse observations and auxiliary fields. The input contains pH observations, \(p\mathrm{CO}_2^s\) observations, and nine MOM6-derived variables. We use a ViViT-style~\citep{vivit} spatiotemporal transformer as the backbone of this module. The input sequence is divided into non-overlapping spatiotemporal tubelets, and each tubelet is projected into a latent token. The tokens are then processed by stacked transformer blocks to capture long-range spatial dependencies and temporal correlations. The output tokens are reshaped back to the global latitude--longitude grid to produce the initial carbon-related transport state:
\[
    z_0 = F_{\theta}(\mathcal O_h, \mathcal O_c, \mathbf X_{\mathrm{MOM6}}),
\]
where $\mathcal O_h$ denotes sparse pH observations, $\mathcal O_c$ denotes sparse \(p\mathrm{CO}_2^s\) observations, and $\mathbf X_{\mathrm{MOM6}}$ denotes the nine MOM6-derived variables. The reconstructed state $z_0$ serves as the initial carbon-related transport state for subsequent physical propagation.

\textbf{Conservative Transport Module.}
The evolution of the reconstructed carbon-related state is governed by the standard advection-diffusion equation:
\[
    \frac{\partial z}{\partial t}
    =
    -\nabla \cdot (z\mathbf w)
    +
    \nabla \cdot (K\nabla z),
\]
where $z$ is the carbon-related transport state, $\mathbf w=(u,v)$ denotes the horizontal surface velocity field, and $K$ is the diffusion coefficient. The advection term describes flow-driven transport, while the diffusion term describes unresolved spreading. In our implementation, the diffusion term is removed and only the advective transport term is retained. Given the initial state $z_0$, the module produces the propagated states
\[
    z_t = \mathcal P(z_0,\mathbf w_{1:t}), \quad t=1,\ldots,T,
\]
where $\mathcal P$ denotes the physical transport operator.

\textbf{Source Module.}
The source module parameterizes the non-conservative residual in the carbon-state evolution. After the conservative transport module produces $z_t$, the source module predicts a source correction
\[
    s_t = S_{\theta}(z_t),
\]
where $S_{\theta}$ is implemented as a U-Net~\citep{ronneberger2015unet}. The corrected carbon-related state is then obtained by
\[
    \tilde z_t = z_t + s_t.
\]
Here $s_t$ represents unresolved source and sink effects associated with air--sea CO$_2$ exchange, biological activity, and other biogeochemical processes that cannot be captured by source-free transport alone.

\textbf{pH Decoder.}
The pH decoder maps the corrected carbon-related state to the reconstructed sea surface pH field. Given the corrected state $\tilde z_t$ and the carbonate equilibrium constants $K_1$ and $K_2$, the decoder predicts
\[
    \hat h_t = G_{\theta}(\tilde z_t,K_{1,t},K_{2,t}),
\]
where $G_{\theta}$ is implemented as a U-Net. The constants $K_1$ and $K_2$ denote the first and second dissociation constants of the carbonate system, which characterize the nonlinear equilibrium among carbonate species and hydrogen ions. By conditioning on $K_1$ and $K_2$, the decoder maps the corrected carbon-related state to pH under carbonate-equilibrium conditions.

\textbf{Chemical Module.}
The chemical module implements carbonate-equilibrium formulas that couple reconstructed pH, \(p\mathrm{CO}_2^s\), and DIC. Given reconstructed pH, it computes the DIC implied by observed \(p\mathrm{CO}_2^s\) and the \(p\mathrm{CO}_2^s\) implied by observed DIC. These calculations are used to evaluate and enforce carbonate-system consistency. Standard carbonate-system calculations can be performed by CO2SYS~\citep{sharp2020co2sysv3}~(under MIT license)\footnote{https://github.com/jamesorr/CO2SYS-MATLAB} or PyCO2SYS~\citep{humphreys2022pyco2sys}~(under GPL-3.0 license)\footnote{https://github.com/mvdh7/PyCO2SYS}. However, these solvers are not directly used as differentiable PyTorch layers in our end-to-end training pipeline. We therefore implement the carbonate-equilibrium calculation with explicit analytical formulas, following the steps below:

Given the reconstructed pH $\hat h_t$, the hydrogen ion concentration is
\[
    [\mathrm H^+]_t = 10^{-\hat h_t}.
\]
The dissolved CO$_2$ concentration is computed from \(p\mathrm{CO}_2^s\) as
\[
    [\mathrm{CO}_2^\ast]_t = K_{0,t}\mathcal O_{c,t},
\]
where $K_{0,t}$ is the CO$_2$ solubility coefficient and $\mathcal O_{c,t}$ denotes \(p\mathrm{CO}_2^s\) observations. The bicarbonate and carbonate ion concentrations are then given by
\[
    [\mathrm{HCO}_3^-]_t
    =
    \frac{K_{1,t}[\mathrm{CO}_2^\ast]_t}{[\mathrm H^+]_t},
    \qquad
    [\mathrm{CO}_3^{2-}]_t
    =
    \frac{K_{1,t}K_{2,t}[\mathrm{CO}_2^\ast]_t}{[\mathrm H^+]_t^2},
\]
where $K_{1,t}$ and $K_{2,t}$ are the first and second dissociation constants of the carbonate system. The DIC implied by the reconstructed pH is
\[
    \hat d_t
    =
    [\mathrm{CO}_2^\ast]_t
    +
    [\mathrm{HCO}_3^-]_t
    +
    [\mathrm{CO}_3^{2-}]_t ,
\]
or equivalently,
\[
    \hat d_t
    =
    K_{0,t}\mathcal O_{c,t}
    \left(
    1
    +
    \frac{K_{1,t}}{[\mathrm H^+]_t}
    +
    \frac{K_{1,t}K_{2,t}}{[\mathrm H^+]_t^2}
    \right).
\]

Conversely, when DIC observations are available, the \(p\mathrm{CO}_2^s\) implied by the reconstructed pH can be obtained by inverting the same carbonate-equilibrium relation. Specifically, given DIC observation $\mathcal O_{d,t}$, we compute
\[
    \hat c_t
    =
    \frac{
        \mathcal O_{d,t}
    }{
        K_{0,t}
        \left(
        1
        +
        \frac{K_{1,t}}{[\mathrm H^+]_t}
        +
        \frac{K_{1,t}K_{2,t}}{[\mathrm H^+]_t^2}
        \right)
    } .
\]
Here $\hat c_t$ denotes the \(p\mathrm{CO}_2^s\) implied by reconstructed pH and DIC observation $\mathcal O_{d,t}$.

To verify the consistency of the formula-based implementation with PyCO2SYS, we compute DIC using both methods, with pH and sea surface $\text{CO}_2$ partial pressure from MOM6 as inputs. The mean relative difference between the formula-based DIC and the PyCO2SYS DIC is $3.736\times 10^{-3}$. The small discrepancy indicates that the formula-based implementation is an appropriate substitution for PyCO2SYS, while remaining compatible with neural-network training.

\subsection{Input and Output Details}
\label{app:input_output_details}

REACT accepts an input tensor of shape $B \times C \times T_{\mathrm{in}} \times H \times W$, where $B$ is the batch size, $T_{\mathrm{in}}$ is the number of input temporal frames, $H \times W$ is the spatial resolution, and $C=11$ denotes the input channels. These channels consist of pH observations, \(p\mathrm{CO}_2^s\) observations, and nine MOM6-derived variables. The model outputs a tensor of shape $B \times 1 \times T_{\mathrm{out}} \times H \times W$, representing the reconstructed sea surface pH sequence. For monthly-to-weekly reconstruction, $T_{\mathrm{out}}=4T_{\mathrm{in}}$, while for weekly-to-daily reconstruction, $T_{\mathrm{out}}=7T_{\mathrm{in}}$.

The initial state reconstruction module processes the full spatiotemporal input sequence with tubelet embedding. The input tensor is partitioned into $N = T_{\mathrm{in}}HW/(tpq)$ non-overlapping tubelets, where $t$, $p$, and $q$ denote the temporal and spatial tubelet sizes. Each tubelet is flattened into a vector of size $Ctpq$ and projected into a latent space of dimension $D$. The ViViT-style encoder maps these tokens back to the latitude--longitude grid and produces the initial carbon-related transport state $z_0 \in \mathbb R^{B \times 1 \times H \times W}$.

Starting from $z_0$, the conservative transport module produces a temporally resolved carbon-related state sequence $\{z_t\}_{t=1}^{T_{\mathrm{out}}} \in \mathbb R^{B \times 1 \times T_{\mathrm{out}} \times H \times W}$. The source module then parameterizes unresolved non-conservative effects as frame-wise residual corrections to the transported state, yielding the corrected carbon-related state sequence $\{\tilde z_t\}_{t=1}^{T_{\mathrm{out}}} \in \mathbb R^{B \times 1 \times T_{\mathrm{out}} \times H \times W}$.

The pH decoder maps the corrected carbon-related state sequence to sea surface pH under carbonate equilibrium conditions. The equilibrium constants $K_1$ and $K_2$ provide the nonlinear chemical link between pH and the carbonate system. The decoder outputs the reconstructed pH sequence $\hat{\mathbf H}=\{\hat h_t\}_{t=1}^{T_{\mathrm{out}}} \in \mathbb R^{B \times 1 \times T_{\mathrm{out}} \times H \times W}$. The chemical module is used when carbonate-system variables are required for chemical constraints or evaluation, while the primary output of REACT remains the reconstructed sea surface pH sequence.

\section{Experiment Details}
\label{app:experiment_details}

\subsection{Datasets}
\label{app:dataset}

\textbf{WOD~\citep{WOD}\footnote{https://www.ncei.noaa.gov/access/world-ocean-database-select/dbsearch.html}.}
WOD is a publicly available global archive of quality-controlled ocean profile data maintained by NOAA NCEI. The WOD 2023 release contains scientifically quality-controlled ocean profile and plankton data collected over the global ocean from 1772 to 2022, and includes measurements of temperature, salinity, alkalinity, pH, nutrients, and other oceanographic variables. In this work, we use surface pH records from WOD 2023 to build sparse pH observations for sea surface pH reconstruction. The selected records cover the period from January 1959 to December 2022.

\textbf{SOCAT~\citep{SOCAT}~(under CC BY 4.0 license)\footnote{https://socat.info/index.php/version-2023/}.}
SOCAT is a community synthesis of quality-controlled surface ocean CO$_2$ observations, with the released data primarily reported as fugacity of CO$_2$ ($f\mathrm{CO}_2$). These measurements are collected from ships, moorings, autonomous platforms, and drifting surface platforms over the global ocean and coastal seas. We use surface CO$_2$ records from SOCAT version 2023 as sparse carbon-system observations. The selected records cover the period from January 1959 to December 2022.

\textbf{MOM6-COBALTv2~\citep{MOM6}~(under LGPLv3 license)\footnote{https://github.com/NOAA-GFDL/MOM6-examples}.}
MOM6 is the sixth-generation Modular Ocean Model developed at NOAA GFDL for simulating ocean circulation, and COBALTv2 is the second-generation Carbon, Ocean Biogeochemistry, and Lower Trophics biogeochemical model used in GFDL Earth system modeling. We use MOM6-COBALTv2 outputs from January 1959 to December 2022 as dense gridded reference data. MOM6-COBALTv2 provides dense gridded ocean fields, including sea surface pH, surface CO$_2$, DIC, temperature, salinity, and velocity fields. All variables are regridded to a global $1^\circ \times 1^\circ$ grid. The daily fields are used to construct datasets for both monthly-to-weekly and weekly-to-daily reconstruction tasks.

\subsection{Metrics}
\label{app:metrics}

\textbf{Normalized RMSE.}
We use normalized root mean square error (NRMSE) to measure the discrepancy between the reconstructed field and the ground-truth field. We first compute the latitude-weighted RMSE as
{\small
\begin{align*}
    \mathrm{RMSE}
    =
    \frac{1}{N}
    \sum_{t=1}^{N}
    \sqrt{
    \frac{1}{HW}
    \sum_{h=1}^{H}
    \sum_{w=1}^{W}
    \alpha_h
    \left(
    y_{thw} - u_{thw}
    \right)^2
    },
\end{align*}
}
where $N$ is the number of time points, $H$ and $W$ denote the numbers of latitude and longitude grid points, respectively, $y_{thw}$ is the ground-truth value at time $t$ and grid location $(h,w)$, and $u_{thw}$ is the reconstructed value at time $t$ and grid location $(h,w)$. The latitude weight $\alpha_h$ accounts for the area variation across latitudes and is defined as
\[
    \alpha_h
    =
    \frac{\cos \phi_h}
    {
    \frac{1}{H}\sum_{h'=1}^{H}\cos \phi_{h'}
    },
\]
where $\phi_h$ is the latitude of the $h$-th grid row. The NRMSE is then obtained by normalizing RMSE with the mean of the ground-truth values:
{\small
\begin{align*}
    \mathrm{NRMSE}
    =
    \frac{\mathrm{RMSE}}
    {
    \frac{1}{NHW}
    \sum_{t=1}^{N}
    \sum_{h=1}^{H}
    \sum_{w=1}^{W}
    y_{thw}
    }.
\end{align*}
}

\textbf{pH.}
The reported pH NRMSE is computed between the reconstructed and ground-truth sea surface pH fields, given by
{\small
\begin{align*}
    \mathrm{NRMSE}_{\mathrm{pH}}
    =
    \mathrm{NRMSE}(\hat h, h).
\end{align*}
}
Here, $h$ and $\hat h$ denote the ground-truth and reconstructed sea surface pH fields, respectively.

\textbf{Chem.}
Let $\hat h$ be the reconstructed pH field, and let $c$ and $d$ be the reference \(p\mathrm{CO}_2^s\) and DIC fields. We denote by $\Gamma_D(\hat h,c)$ the DIC implied by reconstructed pH and \(p\mathrm{CO}_2^s\), and by $\Gamma_C(\hat h,d)$ the \(p\mathrm{CO}_2^s\) implied by reconstructed pH and DIC. The chemical consistency error is defined as
\[
    \mathrm{Chem.}
    =
    \frac{1}{2}
    \mathrm{NRMSE}
    \left(
    \Gamma_C(\hat h,d),
    c
    \right)
    +
    \frac{1}{2}
    \mathrm{NRMSE}
    \left(
    \Gamma_D(\hat h,c),
    d
    \right).
\]

\textbf{Drift$_{\mathrm{local}}$.}
Local drift measures the DIC inventory variation implied by the source-free branch within each prediction clip. Let $\hat h_{\mathrm{flow}}(t)$ be the source-free pH prediction at time $t$, and let $c_t$ be the reference \(p\mathrm{CO}_2^s\) field at the same time. The corresponding source-free DIC field is
\[
    \hat d_{\mathrm{flow}}(t)
    =
    \Gamma_D(\hat h_{\mathrm{flow}}(t),c_t).
\]
Let $M[\hat d_{\mathrm{flow}}](t)$ denote the spatial integral of $\hat d_{\mathrm{flow}}(t)$ over the ocean domain. For a clip $i$ with time points $\{t_{i,1},\ldots,t_{i,L}\}$, we define
\[
    D_i^{\mathrm{loc}}
    =
    \frac{1}{L-1}
    \sum_{\ell=2}^{L}
    \frac{
    \left|
    M[\hat d_{\mathrm{flow}}](t_{i,\ell})
    -
    M[\hat d_{\mathrm{flow}}](t_{i,1})
    \right|
    }{
    \left|
    M[\hat d_{\mathrm{flow}}](t_{i,1})
    \right|
    }.
\]
The reported local drift is averaged over all clips:
\[
    \mathrm{Drift}_{\mathrm{local}}
    =
    \frac{1}{C}
    \sum_{i=1}^{C}
    D_i^{\mathrm{loc}},
\]
where $C$ is the number of clips.

\textbf{Drift$_{\mathrm{global}}$.}
Global drift measures the DIC inventory variation implied by the source-free branch over longer evaluation segments. For a segment $s$ with time points $\{t_{s,1},\ldots,t_{s,T_s}\}$, we define
\[
    D_s^{\mathrm{glob}}
    =
    \frac{1}{T_s-1}
    \sum_{j=2}^{T_s}
    \frac{
    \left|
    M[\hat d_{\mathrm{flow}}](t_{s,j})
    -
    M[\hat d_{\mathrm{flow}}](t_{s,1})
    \right|
    }{
    \left|
    M[\hat d_{\mathrm{flow}}](t_{s,1})
    \right|
    }.
\]
Using the head and tail evaluation segments, the reported global drift is
\[
    \mathrm{Drift}_{\mathrm{global}}
    =
    \frac{1}{2}
    \left(
    D_{\mathrm{head}}^{\mathrm{glob}}
    +
    D_{\mathrm{tail}}^{\mathrm{glob}}
    \right).
\]

\subsection{Implementation Details}
\label{app:implementation_details}

We train the model for 250 epochs using the AdamW optimizer~\citep{adamw} with momentum parameters $(0.9,0.999)$. Cosine annealing is used for learning-rate scheduling. We select the learning rate and weight decay by grid search over $\{2,1,0.8,0.5,0.3,0.1,0.08\}\times 10^{-3}$ and $\{8,5,2,1,0.5,0.1\}\times 10^{-3}$, respectively. The selected learning rate and weight decay are $1\times10^{-3}$ and $5\times10^{-4}$ for the MOM6 simulation experiments, and $1\times10^{-4}$ and $5\times10^{-4}$ for the WOD real-observation experiments. 
We tune the mass-conservation loss weight over $\{0.5,0.1,0.01,0.001\}$. For the simulation experiments, we use a mass-conservation loss weight of $0.01$ for monthly-to-weekly reconstruction and $0.1$ for weekly-to-daily reconstruction. For the WOD real-observation experiments, we use a mass-conservation loss weight of $0.01$.
GradNorm~\citep{chen2018gradnorm} is applied in all experiments to adaptively balance the active loss terms. We set the GradNorm asymmetry parameter to $1.5$, use a learning rate of $0.025$ for the loss weights, and update the adaptive weights every $60$ iterations. For the relative task-priority coefficients in GradNorm, we assign a larger coefficient to pH reconstruction and smaller coefficients to the carbonate-consistency terms, using $2$ for pH and $0.5$ for both \(p\mathrm{CO}_2^s\) and DIC-based chemical constraints.
We conduct all experiments on 8 NVIDIA 6000D GPUs with a total batch size of 8 for both datasets.

\subsection{Sampling Strategies}
\label{app:sampling_strategy}

\begin{wrapfigure}{r}{0.50\linewidth}
    \centering
    \captionsetup{font=footnotesize,skip=2pt}
    \vspace{-0.5em}
    \includegraphics[width=0.98\linewidth]{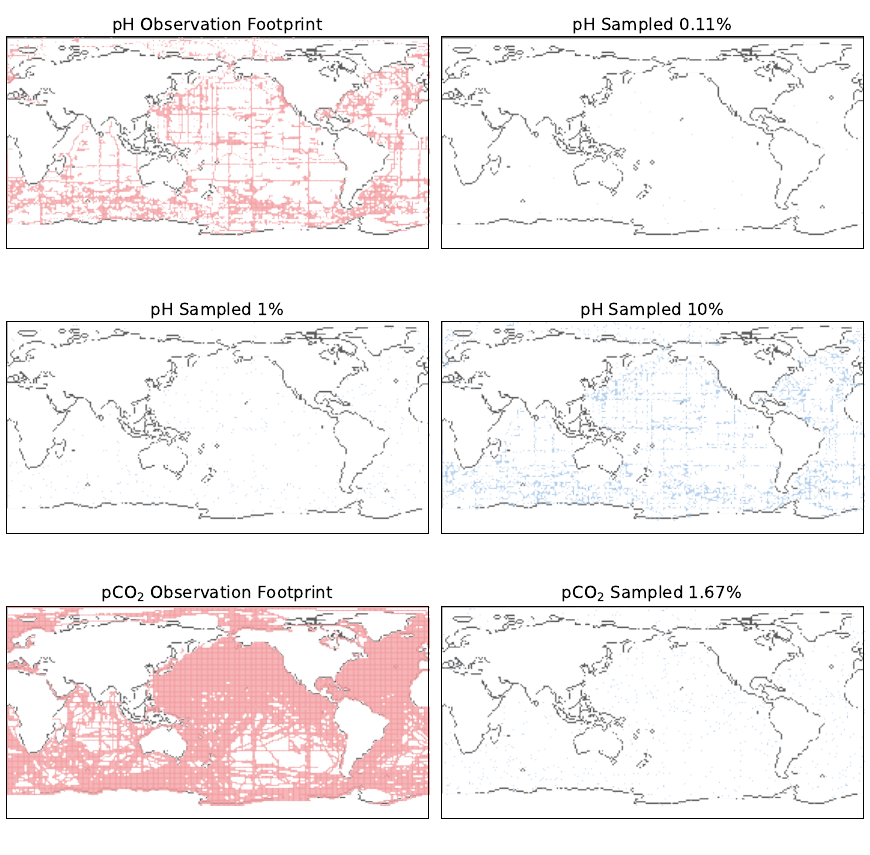}
    \caption{\textbf{Observation-guided sampling strategy.}
    Red points show historical observation footprints, defined as grid locations with at least one valid observation over the observational period. Blue points show sampled locations at one randomly selected time step under the target sampling ratios. pH is sampled at $0.11\%$, $1\%$, and $10\%$, while \(p\mathrm{CO}_2^s\) is sampled at $1.67\%$.}
    \label{fig:obs_guided_sampling}
    \vspace{-1.0em}
\end{wrapfigure}

\textbf{Simulation Dataset.}
To simulate realistic sparse observations, we use an observation-guided sampling strategy rather than uniformly sampling over the entire ocean domain, as shown in Figure~\ref{fig:obs_guided_sampling}. For each variable, we first collect all grid locations that have valid observations at least once over the observational period. This historical observation footprint defines the candidate sampling region.
Let $N_{\mathrm{ocean}}$ be the number of ocean grid points and let $r$ be the target sampling ratio. At each time step, we sample approximately $rN_{\mathrm{ocean}}$ ocean grid points, rounded to the nearest integer. Sampling is first performed uniformly from the historical observation footprint of the corresponding variable. If the footprint contains fewer points than required by the target ratio, all footprint points are selected and the remaining points are sampled from nearby ocean grid cells around the footprint until the target number is reached.
For pH, we use sampling ratios of $0.11\%$, $1\%$, and $10\%$. For \(p\mathrm{CO}_2^s\), we use a fixed sampling ratio of $1.67\%$, matching the sparsity level of SOCAT \(p\mathrm{CO}_2^s\) observations. This procedure preserves the spatial bias of real observational networks while allowing controlled comparisons under different sparsity levels. 

\textbf{Real Observations.}
For real-observation experiments, we first identify valid observation locations where pH and \(p\mathrm{CO}_2^s\) are both available. Since directly available DIC observations are not provided, we compute observational DIC using PyCO2SYS from WOD pH, SOCAT \(p\mathrm{CO}_2^s\), salinity, and temperature. The intersection of valid pH, \(p\mathrm{CO}_2^s\), and computed DIC locations is used as the available observation set.
We uniformly sample $75\%$ of the available observation locations as input observations and hold out the remaining $25\%$ as target observations. The input observations are used to construct the model input, while the held-out observations are used for evaluation.

\subsection{Hyperparameter setting.}
\label{app:model_params}

\textbf{REACT\footnote{https://anonymous.4open.science/r/REACT-67E7}.}
The model hyperparameters of REACT are summarized in Table~\ref{tab:react-params}. Training hyperparameters are provided in Appendix~\ref{app:implementation_details}.

\begin{table}[t]
\centering
\caption{Model hyperparameters used for REACT.}
\label{tab:react-params}
\setlength{\tabcolsep}{6pt}
\small
\begin{tabular}{p{3.0cm} p{8.2cm} >{\centering\arraybackslash}p{1.6cm}}
\toprule
\textbf{Hyper-param.} & \textbf{Meaning} & \textbf{Value} \\
\midrule
$t$ & Temporal tubelet size in the ViViT initializer & 1 \\
$p,q$ & Spatial patch size in the ViViT initializer & 12, 12 \\
$D$ & Embedding dimension in the ViViT initializer & 768 \\
\# Layers & Number of transformer layers in the ViViT initializer & 12 \\
\# Heads & Number of attention heads in each transformer layer & 12 \\
MLP ratio & Expansion ratio of the MLP layers & 4.0 \\
\# Source levels & Number of U-Net levels in the source module & 4 \\
\# Decoder levels & Number of U-Net levels in the pH decoder & 4 \\
Base channels & Number of channels in the first U-Net level & 64 \\
Dropout & Dropout rate in the ViViT initializer & 0.0 \\
Attention dropout & Attention dropout rate in the ViViT initializer & 0.0 \\
\bottomrule
\end{tabular}
\end{table}

\begin{table}[t]
\centering
\caption{Hyper-parameters used for all baseline models.}
\label{tab:baseline-params}
\setlength{\tabcolsep}{3pt}
\fontsize{7}{8}\selectfont
\begin{tabular}{p{1.8cm} p{2.4cm} p{7.2cm} >{\centering\arraybackslash}p{1.2cm}}
\toprule
\textbf{Baseline} & \textbf{Hyper-param.} & \textbf{Meaning} & \textbf{Value} \\
\midrule
NUTS
& $t$ & Tubelet size in the initializer & 2 \\
& $p,q$ & Patch size in the transformer layers & 18, 36 \\
& D & Embedding dimension & 768 \\
& \# Coarse layer & Number of transformer layers in the initializer & 12 \\
& \# Refine layer & Number of transformer layers in the refinement module & 6 \\
& \# Heads & Number of attention heads in each transformer layer & 16 \\
& MLP dimension & Hidden dimension of the MLP layers & 2048 \\
& \# Blocks & Number of convolutional blocks in the source/diffusion network & [2, 2] \\
& Base channels & Number of channels in the first convolutional block of source/diffusion network & 16 \\
& Dropout(coarse) & Dropout rate in data-driven initializer & 0.0 \\
& Dropout(refine) & Dropout rate in refinement module & 0.1 \\
\midrule
4D-VarNet
& Shape data & Input channels of the model & 11 \\
& D & Hidden dimension for bilinear units & 64 \\
& \# Blocks & Number of residual bilinear blocks & 5 \\
& Half kernel size & Convolution half-kernel sizes & 1 \\
& Subsample stride & Subsampling stride & 1 \\
& Dropout & Dropout rate applied in encoder & 0.0 \\
\midrule
Marble
& Grid size & Spatial resolution of modulation grid & 8 \\
& Grid base & Number of basis functions per grid cell & 32 \\
& Hidden dimension & Dimension of hidden layers in Siren~\citep{siren} network & 128 \\
& \# Layer & Number of layers in Siren~\citep{siren} network & 4 \\
& Latent dimension & Dimension of latent vector & 32 \\
& Modulation dimension & Hidden dimension of modulation network & 96 \\
\midrule
FNO
& \# Modes & Number of modes to keep in Fourier layer & 16 \\
& In Channels & Number of channels in input function & 11 \\
& Out Channels & Number of channels in output function & 1 \\
& Hidden Channels & Width of FNO & 64 \\
& Projection Channels & Number of channels in the projection block & 128 \\
& Lifting Channels & Number of channels in the lifting block & 64 \\
& \# Layers & Number of Fourier layers & 4 \\
\midrule
U-Net
& Kernel size & Kernel size of encoder/decoder block & 3 \\
& Stride & Stride of encoder/decoder block & 1 \\
& Padding size & Padding size of encoder/decoder block & 1 \\
& Base channels & Number of channels in the first encoder block & 64 \\
& \# Blocks & Number of encoder/decoder blocks & 4 \\
\midrule
ViT
& $p,q$ & Patch size & 12, 12 \\
& D & Embedding dimension & 768 \\
& \# Blocks & Number of ViT blocks & 12 \\
& \# Heads & Number of attention heads & 12 \\
& MLP dimension & The hidden dimension of the MLP layers & 3072 \\
& Dropout & Dropout rate & 0.0 \\
\midrule
AtmoRep
& D & Embedding Dimension & 2048 \\
& \# Encoder layer & Number of transformer layers in the encoder & 10 \\
& \# Encoder heads & Number of attention heads in the encoder & 16 \\
& \# Decoder layer & Number of transformer layers in the decoder & 10 \\
& \# Decoder heads & Number of attention heads in the decoder & 16 \\
& \# Tail nets & Number of networks in Tail Ensemble & 16 \\
& \# Tail layer & Number of layers per network in Tail Ensemble (0 represents linear) & 0 \\
\midrule
Prithvi
& $p,q$ & Patch size & 2, 2 \\
& D & Embedding Dimension & 2560 \\
& \# Encoder layer & Number of transformer layers in the encoder & 25 \\
& \# Decoder layer & Number of transformer layers in the decoder & 5 \\
& \# Heads & Number of attention heads in each transformer layer & 16 \\
& MLP dimension & Hidden dimension of the MLP layers & 10240 \\
& Window size & Window size of attention layers & 30, 32 \\
& Drop path & For stochastic depth~\citep{droppath} & 0.0 \\
& Dropout & Dropout rate & 0.0 \\
\bottomrule
\end{tabular}
\end{table}

\textbf{NUTS~\citep{nuts}~(under MIT license)\footnote{https://github.com/Leamonz/NUTS}.}
The hyperparameters used for NUTS are listed in Table~\ref{tab:baseline-params}. The model is trained for $250$ epochs using the AdamW optimizer with momentum parameters $(0.9,0.999)$. We set the learning rate to $5\times10^{-4}$ and the weight decay to $2\times10^{-3}$ for all experiments. Cosine annealing is used for learning-rate scheduling.

\textbf{4D-VarNet~\citep{beauchamp20234dvarnet} (under CeCILL-C license)\footnote{https://zenodo.org/records/8048541}.}
The hyperparameters used for 4D-VarNet are summarized in Table~\ref{tab:baseline-params}. The model is trained for $250$ epochs using the AdamW optimizer with momentum parameters $(0.9,0.999)$. We set the learning rate to $1\times10^{-4}$ and the weight decay to $1\times10^{-4}$ for all experiments. Cosine annealing is used for learning-rate scheduling.

\textbf{Marble~\citep{marble} (under MIT license)\footnote{https://github.com/LeapLabTHU/GridMix}.}
The hyperparameters used for Marble are summarized in Table~\ref{tab:baseline-params}. The model is trained for $250$ epochs using the AdamW optimizer with momentum parameters $(0.9,0.999)$. We set the learning rate to $1\times10^{-3}$ and the weight decay to $5\times10^{-3}$ for all experiments. Cosine annealing is used for learning-rate scheduling.

\textbf{FNO~\citep{fno} (under MIT license)\footnote{https://github.com/neuraloperator/neuraloperator}.}
The hyperparameters used for FNO are summarized in Table~\ref{tab:baseline-params}. The model is trained for $250$ epochs using the AdamW optimizer with momentum parameters $(0.9,0.999)$. We set the learning rate to $3\times10^{-4}$ and the weight decay to $1\times10^{-4}$ for all experiments. Cosine annealing is used for learning-rate scheduling.

\textbf{U-Net~\citep{ronneberger2015unet} (under GPL-3.0 license)\footnote{https://github.com/milesial/Pytorch-UNet}.}
The hyperparameters used for U-Net are specified in Table~\ref{tab:baseline-params}. The model is trained for $250$ epochs using the AdamW optimizer with momentum parameters $(0.9,0.999)$. We set the learning rate to $8\times10^{-4}$ and the weight decay to $5\times10^{-3}$ for all experiments. Cosine annealing is used for learning-rate scheduling.

\textbf{Vision Transformer (ViT)~\citep{vit}.}
The hyperparameters used for ViT are configured as outlined in Table~\ref{tab:baseline-params}. The model is trained for $250$ epochs using the AdamW optimizer with momentum parameters $(0.9,0.999)$. We set the learning rate to $5\times10^{-4}$ and the weight decay to $1\times10^{-3}$ for all experiments. Cosine annealing is used for learning-rate scheduling.

\textbf{AtmoRep~\citep{atmorep} (under MIT license)\footnote{https://github.com/clessig/atmorep}.}
The hyperparameters used for AtmoRep are described in Table~\ref{tab:baseline-params}. We load pretrained weights from model ID \texttt{4nvwbetz}. The model is trained for $250$ epochs using the AdamW optimizer with momentum parameters $(0.9,0.999)$. We set the learning rate to $5\times10^{-4}$ and the weight decay to $2\times10^{-3}$ for all experiments. Cosine annealing is used for learning-rate scheduling.

\textbf{Prithvi~\citep{prithvi} (under MIT license)\footnote{https://github.com/NASA-IMPACT/Prithvi-WxC}.}
The hyperparameters used for Prithvi are described in Table~\ref{tab:baseline-params}. We load the official pretrained weights\footnote{https://huggingface.co/Prithvi-WxC/prithvi.wxc.2300m.v1} and finetune the input adapter, output head, and last decoder block. The model is trained for $100$ epochs using the AdamW optimizer with momentum parameters $(0.9,0.999)$. We set the learning rate to $3\times10^{-4}$ and the weight decay to $5\times10^{-4}$ for all experiments. Cosine annealing is used for learning-rate scheduling.

\subsection{Model efficiency}
\label{app:efficiency}

We further evaluate the computational efficiency of different models under the same hardware setting. All models are trained and evaluated on eight NVIDIA RTX 6000D GPUs with a per-GPU batch size of 1, corresponding to a total batch size of 8. Table~\ref{tab:efficiency} reports the number of parameters, training time per epoch, peak training VRAM, inference time on the test set, peak inference VRAM, and the corresponding reconstruction performance. REACT requires more computation than lightweight baselines because it jointly performs transport modeling, source correction, and carbonate-system regulation. Nevertheless, its cost remains substantially lower than large foundation-model baselines such as Prithvi, while achieving the best pH NRMSE and Chem. This indicates that REACT provides a favorable balance between reconstruction quality, chemical consistency, and computational cost.

\begin{table}[t]
\scriptsize
\centering
\caption{Parameter, efficiency, and performance comparison of different models on MOM6 monthly-to-weekly task under the 0.11\% sampling ratio. pH NRMSE is reported in units of $10^{-4}$, and Chem. is reported in units of $10^{-2}$.}
\label{tab:efficiency}

\setlength{\tabcolsep}{11pt}
\renewcommand{\arraystretch}{1}
\begin{tabular}{c c c c c c c c}
\toprule
\multirow{2}{*}{Methods} &
\multirow{2}{*}{Params} &
$\text{Time}_{\text{Train}}$ &
$\text{VRAM}_{\text{Train}}$ &
$\text{Time}_{\text{Infer}}$ &
$\text{VRAM}_{\text{Infer}}$ &
\multirow{2}{*}{pH NRMSE} &
\multirow{2}{*}{Chem.} \\
& &
(s/epoch) & (GB) &
(s) & (GB) &
& \\
\midrule

4D-Var. & 0.27M & 4.654 & 5.444 & 0.513 & 0.821 & 6.92\subpm{0.03} & 2.09\subpm{0.03} \\
Marble  & 0.83M & 1.237 & 0.563 & 0.213 & 0.274 & 25.7\subpm{0.03} & 5.65\subpm{0.10} \\
FNO     & 8.42M & 2.711 & 0.757 & 0.254 & 0.380 & 16.7\subpm{0.07} & 3.88\subpm{0.07} \\
U-Net   & 31.1M & 1.544 & 1.173 & 0.190 & 0.612 & 7.78\subpm{0.04} & 2.17\subpm{0.08} \\
ViT     & 0.11B & 8.805 & 2.319 & 0.298 & 1.093 & 9.56\subpm{0.10} & 2.51\subpm{0.01} \\
NUTS    & 0.15B & 17.865 & 3.640 & 0.361 & 1.722 & 6.74\subpm{0.06} & 1.95\subpm{0.10} \\
AtmoR.  & 0.80B & 73.072 & 18.429 & 1.267 & 6.399 & 14.6\subpm{0.16} & 3.52\subpm{0.04} \\
Prithvi & 2.37B & 1127.814 & 71.069 & 137.471 & 9.664 & 16.0\subpm{0.52} & 3.48\subpm{0.07} \\

\midrule
\textbf{REACT} & 0.18B & 100.745 & 32.087 & 84.035 & 6.118 & 5.84\subpm{0.04} & 1.59\subpm{0.41} \\
\bottomrule
\end{tabular}%
\end{table}

\section{Additional Experiments Results}
\label{app:additional_exps}

\subsection{Decomposed Chemical Errors for Table~\ref{tab:main_results}}
\label{app:main_results_decomposed_chemical}

Table~\ref{tab:main_results_pco2_dic} provides the \(p\mathrm{CO}_2^s\) and DIC components of the Chem. metric reported in Table~\ref{tab:main_results}.
Across both monthly-to-weekly and weekly-to-daily reconstruction tasks, REACT achieves the lowest decomposed chemical errors under all sampling ratios.
This confirms that the Chem. improvements in the main reconstruction results come from simultaneous reductions in both carbonate-system components, rather than from a single dominant term.

\begin{table}[t]
\scriptsize
\centering
\caption{Decomposed chemical errors for the main reconstruction results in Table~\ref{tab:main_results}. The table reports the two components of the chemical consistency metric, i.e., \(p\mathrm{CO}_2^s\) and DIC errors, under different sampling ratios for monthly-to-weekly and weekly-to-daily sea surface pH reconstruction. Both errors are reported in units of $10^{-2}$. All values are rounded to three significant digits. \textit{Params} denotes model size. \textit{U-Net$^{2}$} denotes a two-level U-Net variant.}
\label{tab:main_results_pco2_dic}

\setlength{\tabcolsep}{5.5pt}
\renewcommand{\arraystretch}{1}
\begin{tabular}{c c cc cc cc cc cc cc}
\toprule
\multirow{3}{*}{\textbf{Methods}} &
\multirow{3}{*}{\textbf{Params}} &
\multicolumn{6}{c}{\textbf{Month\(\to\)Week}} & \multicolumn{6}{c}{\textbf{Week\(\to\)Day}} \\
\cmidrule(lr){3-8}\cmidrule(lr){9-14}
& &
\multicolumn{2}{c}{0.11\%} & \multicolumn{2}{c}{1\%} & \multicolumn{2}{c}{10\%} &
\multicolumn{2}{c}{0.11\%} & \multicolumn{2}{c}{1\%} & \multicolumn{2}{c}{10\%} \\
\cmidrule(lr){3-4}\cmidrule(lr){5-6}\cmidrule(lr){7-8}
\cmidrule(lr){9-10}\cmidrule(lr){11-12}\cmidrule(lr){13-14}
& &
\(p\mathrm{CO}_2^s\) & DIC &
\(p\mathrm{CO}_2^s\) & DIC &
\(p\mathrm{CO}_2^s\) & DIC &
\(p\mathrm{CO}_2^s\) & DIC &
\(p\mathrm{CO}_2^s\) & DIC &
\(p\mathrm{CO}_2^s\) & DIC \\
\midrule

Kriging (Exp.) & -- & 10.3 & 9.82 & 7.33 & 6.78 & 4.86 & 4.26 & 10.2 & 9.73 & 7.16 & 6.72 & 4.45 & 3.99 \\
Kriging (Sph.) & -- & 10.3 & 9.99 & 8.00 & 7.65 & 6.41 & 5.82 & 10.4 & 9.91 & 7.52 & 7.18 & 5.30 & 4.77 \\
4D-VarNet      & 0.27M & 2.28 & 1.91 & 2.18 & 1.82 & 2.06 & 1.74 & 1.93 & 1.79 & 1.95 & 1.81 & 2.12 & 1.98 \\
Marble         & 0.83M & 5.95 & 5.36 & 5.99 & 5.42 & 5.77 & 5.25 & 5.66 & 5.35 & 5.67 & 5.44 & 5.64 & 5.39 \\
FNO            & 8.42M & 4.07 & 3.69 & 4.00 & 3.63 & 4.01 & 3.63 & 2.14 & 2.00 & 2.37 & 2.21 & 2.31 & 2.16 \\
U-Net          & 31.1M & 2.32 & 2.02 & 2.27 & 2.01 & 2.24 & 1.99 & 1.93 & 1.79 & 1.91 & 1.78 & 1.90 & 1.78 \\
U-Net$^{2}$    & 1.96M & 2.35 & 2.07 & 2.37 & 2.08 & 2.22 & 2.00 & 2.10 & 1.95 & 2.14 & 1.98 & 2.00 & 1.86 \\
ViT            & 0.11B & 2.65 & 2.36 & 2.68 & 2.41 & 2.64 & 2.35 & 1.76 & 1.57 & 2.23 & 2.12 & 1.94 & 1.82 \\
NUTS           & 0.15B & 2.07 & 1.82 & 2.19 & 1.92 & 2.42 & 2.18 & 1.75 & 1.63 & 1.71 & 1.60 & 1.83 & 1.72 \\
AtmoRep        & 0.80B & 3.74 & 3.30 & 4.24 & 3.67 & 3.08 & 2.65 & 2.44 & 2.26 & 2.16 & 1.99 & 2.14 & 1.96 \\
Prithvi        & 2.37B & 3.83 & 3.13 & 3.76 & 3.14 & 3.45 & 2.88 & 2.92 & 2.58 & 2.91 & 2.58 & 2.91 & 2.54 \\

\midrule
\oursrow
\textbf{REACT} & 0.18B & \textbf{1.73} & \textbf{1.45} & \textbf{1.65} & \textbf{1.40} & \textbf{1.67} & \textbf{1.41} & \textbf{1.44} & \textbf{1.27} & \textbf{1.42} & \textbf{1.28} & \textbf{1.40} & \textbf{1.25} \\
\bottomrule
\end{tabular}%
\end{table}

\subsection{Additional Qualitative Results}
\label{app:additional_qualitative_results}

\begin{figure}[t]
    \centering
    \captionsetup{font=footnotesize,skip=2pt}
    \includegraphics[width=0.95\linewidth]{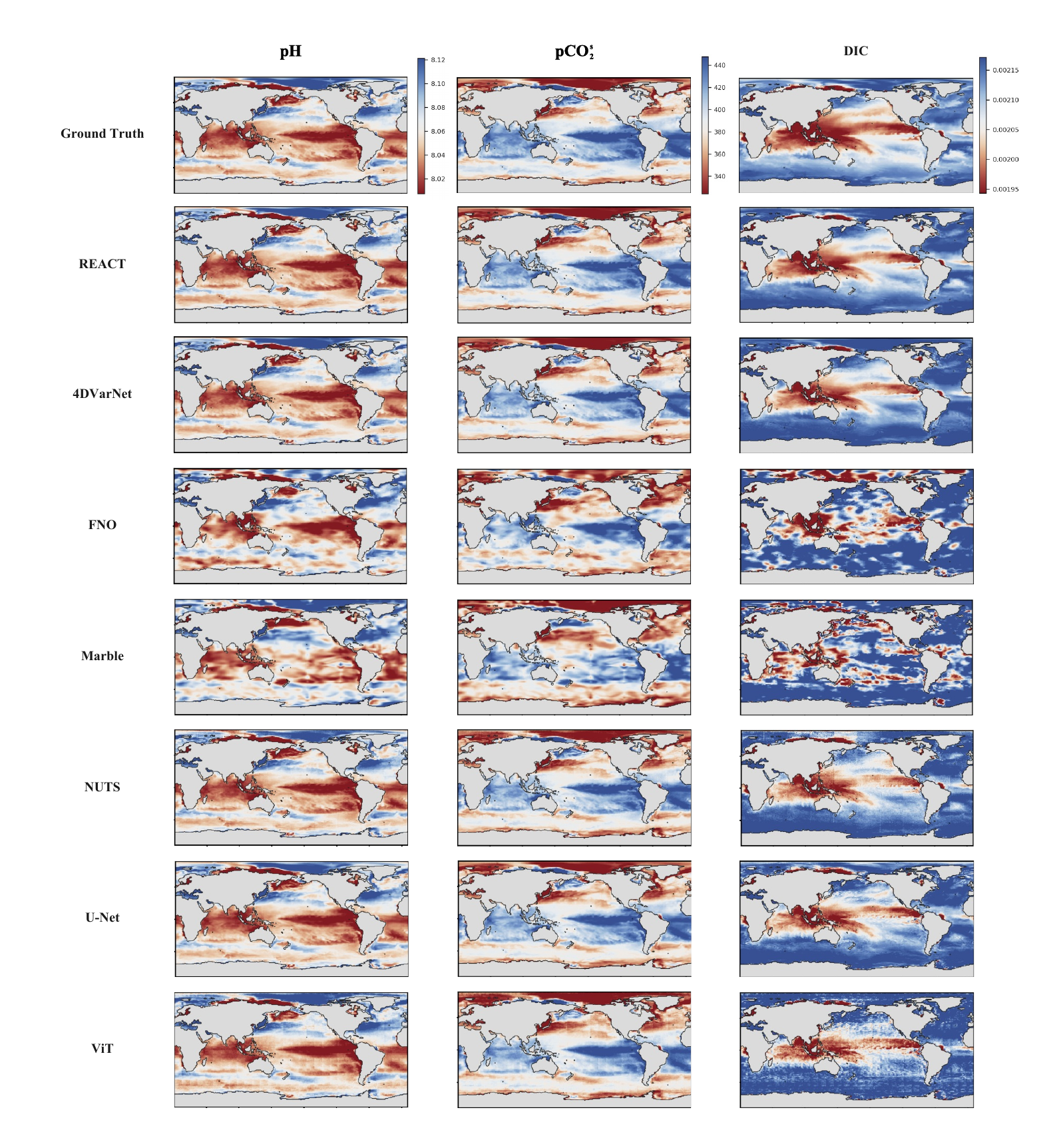}
    \caption{\textbf{Qualitative field comparison.} Ground-truth and reconstructed sea surface pH, \(p\mathrm{CO}_2^s\), and DIC fields from different methods at one randomly selected MOM6 test time slice.}
    \label{fig:qualitative_fields}
\end{figure}

\begin{figure}[t]
    \centering
    \captionsetup{font=footnotesize,skip=2pt}
    \includegraphics[width=0.95\linewidth]{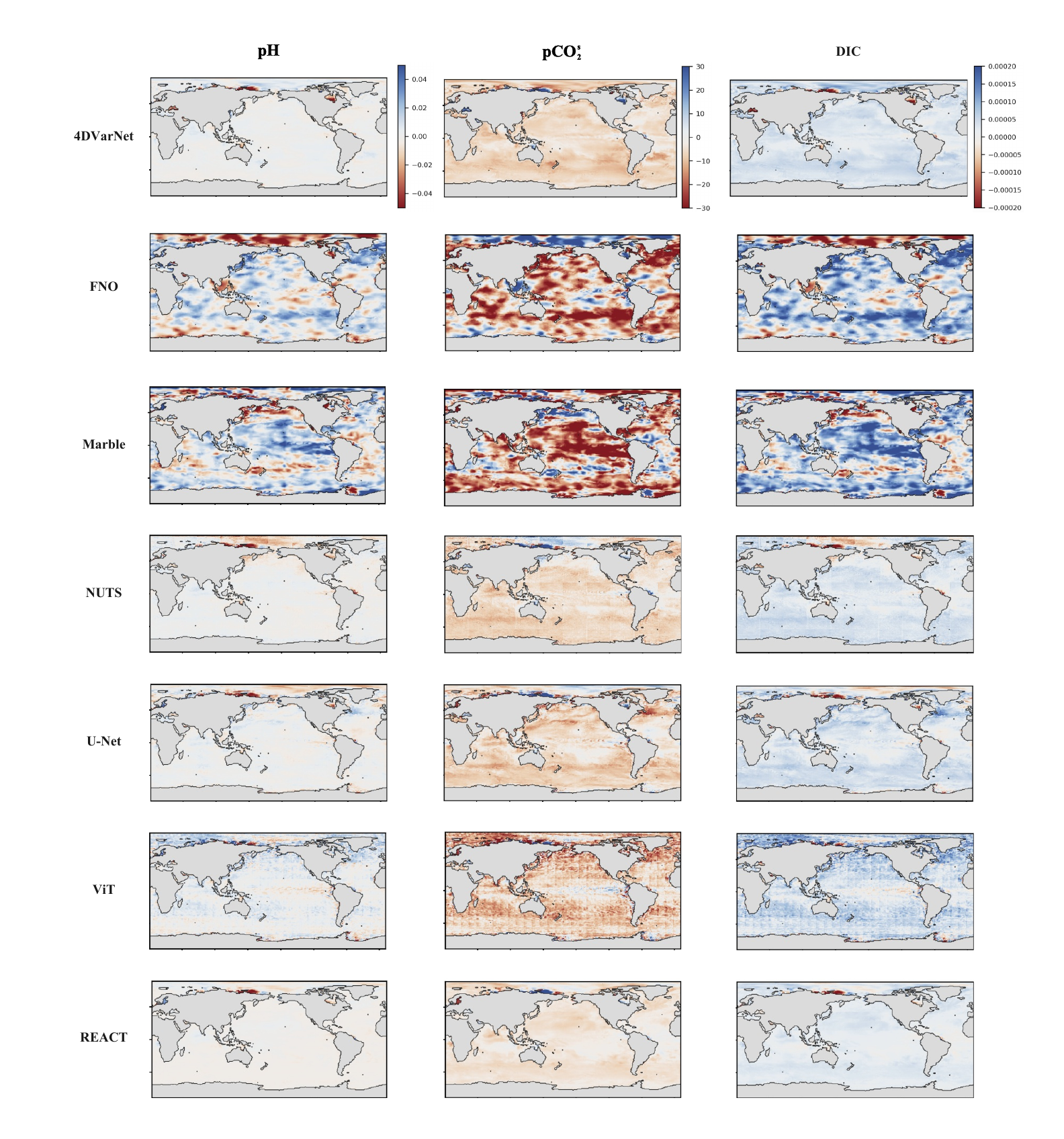}
    \caption{\textbf{Qualitative error maps.} Reconstruction errors of different methods for sea surface pH, \(p\mathrm{CO}_2^s\), and DIC at the same time slice, computed as prediction minus ground truth.}
    \label{fig:qualitative_errors}
\end{figure}

This subsection provides the additional qualitative results referenced in Figure~\ref{fig:qualitative}. Figure~\ref{fig:qualitative_fields} compares the ground-truth and reconstructed sea surface pH, \(p\mathrm{CO}_2^s\), and DIC fields from different methods at one randomly selected MOM6 test time slice. Figure~\ref{fig:qualitative_errors} shows the corresponding reconstruction errors, computed as prediction minus ground truth. The results show that REACT better preserves the spatial structures of the three variables and produces smaller residual errors than the baselines.

\begin{wrapfigure}[10]{r}{0.42\linewidth}
    \centering
    \vspace{-1em}
    \captionsetup{font=footnotesize,skip=2pt}
    \includegraphics[width=0.98\linewidth]{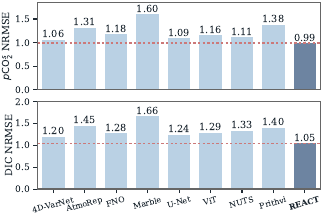}
    \caption{\textbf{Decomposed real-observation results.} NRMSE of \(p\mathrm{CO}_2^s\) and DIC for different methods on real WOD observations. Both metrics are reported in units of \(10^{-1}\).}
    \label{fig:wod_pco2_dic_results}
    \vspace{-0.8em}
\end{wrapfigure}

\subsection{Additional Real-Observation Results}
\label{app:wod_decomposed_results}

This subsection provides the decomposed real-observation results corresponding to Figure~\ref{fig:wod_real_results}. Figure~\ref{fig:wod_pco2_dic_results} reports the NRMSE of \(p\mathrm{CO}_2^s\) and DIC on the WOD real-observation experiment. Both metrics are reported in units of \(10^{-1}\). REACT achieves the lowest error on both carbonate variables, showing that its improvement on real observations is consistent not only for pH and Chem., but also for the two decomposed carbonate-system quantities.

\subsection{Decomposed Chemical Errors for Table~\ref{tab:module_ablation}}
\label{app:loss_ablation_decomposed_chemical}

Table~\ref{tab:loss_ablation_pco2_dic} provides the decomposed \(p\mathrm{CO}_2^s\) and DIC errors for the source-module and constraint ablations reported in Table~\ref{tab:module_ablation}. The results show that the chemical constraint consistently reduces both carbonate-system components. For REACT on the monthly-to-weekly reconstruction task, adding \(\mathcal L_{\mathrm{chem}}\) reduces the \(p\mathrm{CO}_2^s\) error from \(2.06\times10^{-2}\) to \(1.73\times10^{-2}\) and the DIC error from \(1.82\times10^{-2}\) to \(1.45\times10^{-2}\). The same pattern is observed on the weekly-to-daily reconstruction task and in the U-Net ablation, confirming that the chemical constraint improves carbonate closure beyond the REACT architecture alone.

\begin{table*}[t]
\centering
\scriptsize
\setlength{\tabcolsep}{11pt}
\renewcommand{\arraystretch}{1}
\caption{Decomposed chemical errors for the source module and constraint ablations. We report \(p\mathrm{CO}_2^s\) and DIC errors for the ablation variants in Table~\ref{tab:module_ablation}, covering REACT on Month$\to$Week and Week$\to$Day reconstruction under the 0.11\% sampling ratio, and the U-Net baseline ablation on Month$\to$Week reconstruction under the 0.11\% sampling ratio. Both metrics are reported in units of $10^{-2}$. All values are rounded to three significant digits.}
\label{tab:loss_ablation_pco2_dic}
\begin{tabular}{c cc cc cc cc}
\toprule
&
\multicolumn{2}{c}{} &
\multicolumn{2}{c}{REACT (Month$\to$Week)} &
\multicolumn{2}{c}{REACT (Week$\to$Day)} &
\multicolumn{2}{c}{U-Net (Month$\to$Week)} \\
\cmidrule(lr){4-5}\cmidrule(lr){6-7}\cmidrule(lr){8-9}
Source &
$\mathcal{L}_{\mathrm{mass}}$ &
$\mathcal{L}_{\mathrm{chem}}$ &
\(p\mathrm{CO}_2^s\) &
DIC &
\(p\mathrm{CO}_2^s\) &
DIC &
\(p\mathrm{CO}_2^s\) &
DIC \\
\midrule
--         & --         & --         & 8.94 & Div. & 2.84 & 2.62 & 2.27 & 1.98 \\
\checkmark & --         & --         & 2.06 & 1.80 & 1.71 & 1.59 & 2.28 & 2.00 \\
\checkmark & \checkmark & --         & 2.06 & 1.82 & 1.76 & 1.62 & 2.17 & 1.89 \\
\checkmark & \checkmark & \checkmark & 1.73 & 1.45 & 1.44 & 1.27 & 1.77 & 1.53 \\
\bottomrule
\end{tabular}
\end{table*}

\subsection{Decomposed Chemical Errors for Monthly-to-Weekly Design Ablations}
\label{app:monthly_weekly_ablations_decomposed_chemical}

Table~\ref{tab:monthly_weekly_ablations_pco2_dic} provides the decomposed \(p\mathrm{CO}_2^s\) and DIC errors for the monthly-to-weekly design ablations reported in Table~\ref{tab:ablations_choices}. The decomposed results show that most architectural choices have similar carbonate-system errors around the default setting, while removing key auxiliary variables causes larger degradation. In particular, removing sea surface \(p\mathrm{CO}_2^s\) substantially increases both decomposed errors, indicating that sparse \(p\mathrm{CO}_2^s\) observations are important for maintaining carbonate consistency. The 3D CNN initializer also becomes unstable, consistent with the divergent behavior reported in the main ablation table.

\begin{table*}[t]
\centering
\scriptsize
\setlength{\tabcolsep}{4pt}
\renewcommand{\arraystretch}{1.03}
\caption{Decomposed chemical errors for the monthly-to-weekly ablation study in Table~\ref{tab:ablations_choices}. The table reports \(p\mathrm{CO}_2^s\) and DIC errors in units of $10^{-2}$ for interval length, state initializer depth, pH decoder/source-module depth, state initializer architecture, and auxiliary-variable ablations. The default setting uses interval length $4$, state initializer depth $12$, pH decoder/source-module depth $4$, a ViViT state initializer, all auxiliary variables, and both mass and chemistry constraints. All values are rounded to three significant digits.}
\label{tab:monthly_weekly_ablations_pco2_dic}

\begin{minipage}[t]{0.475\textwidth}
\centering
{\renewcommand{\arraystretch}{1.05}
\begin{tabular*}{\linewidth}{@{\extracolsep{\fill}}lccc@{}}
\toprule
Factor & Setting & \(p\mathrm{CO}_2^s\) & DIC \\
\midrule

\multirow{4}{*}{Interval length}
& 1 & 1.71 & 1.44 \\
& 2 & 1.66 & 1.36 \\
& 4 & 1.73 & 1.45 \\
& 8 & 1.70 & 1.46 \\

\midrule
\multirow{4}{*}{Initializer depth}
& 4  & 1.73 & 1.46 \\
& 6  & 1.74 & 1.46 \\
& 12 & 1.73 & 1.45 \\
& 24 & 1.71 & 1.47 \\

\midrule
\multirow{4}{*}{Decoder/source depth}
& 2 & 1.66 & 1.42 \\
& 3 & 1.67 & 1.42 \\
& 4 & 1.73 & 1.45 \\
& 5 & 1.99 & 1.72 \\

\bottomrule
\end{tabular*}}
\end{minipage}
\hfill
\begin{minipage}[t]{0.475\textwidth}
\centering
\begin{tabular*}{\linewidth}{@{\extracolsep{\fill}}lccc@{}}
\toprule
Factor & Setting & \(p\mathrm{CO}_2^s\) & DIC \\
\midrule

\multirow{4}{*}{Initializer}
& ViT    & 1.66 & 1.40 \\
& 2D CNN & 1.80 & 1.53 \\
& 3D CNN & 7.83 & 916 \\
& ViViT  & 1.73 & 1.45 \\

\midrule
\multirow{9}{*}{Auxiliary removed}
& \(p\mathrm{CO}_2^s\) & 5.22 & 4.73 \\
& \(p\mathrm{CO}_2^a\) & 1.72 & 1.39 \\
& Chl.      & 1.84 & 1.63 \\
& SSH       & 1.72 & 1.44 \\
& MLD       & 1.72 & 1.42 \\
& Salt      & 1.82 & 1.59 \\
& Temp.     & 1.75 & 1.47 \\
& $u$       & 1.75 & 1.46 \\
& $v$       & 1.90 & 1.56 \\

\bottomrule
\end{tabular*}
\end{minipage}
\end{table*}

\subsection{Weekly-to-Daily Design Ablations}
\label{app:weekly_daily_design_ablations}

Table~\ref{tab:weekly_daily_ablation_full} provides the weekly-to-daily counterpart of the monthly-to-weekly design ablations in Table~\ref{tab:ablations_choices}. The default configuration still provides the best overall trade off. Although the model with an initializer depth of 24 achieves better overall performance than the default configuration, we retain a depth of 12 due to computational cost considerations. The auxiliary-variable ablations again show that removing \(p\mathrm{CO}_2^s\) causes the largest degradation in both decomposed carbonate errors, confirming its importance for chemically consistent reconstruction across temporal scales.

\begin{table*}[t]
\centering
\scriptsize
\setlength{\tabcolsep}{3.5pt}
\renewcommand{\arraystretch}{1.03}
\caption{Weekly-to-daily counterpart of the ablation study in Table~\ref{tab:ablations_choices} under the $0.11\%$ sampling ratio. The table reports full metrics for interval length, state initializer depth, pH decoder/source-module depth, and state initializer architecture, with auxiliary-variable ablations shown separately using decomposed chemical errors. The default setting uses interval length $2$, state initializer depth $12$, pH decoder/source-module depth $4$, a ViViT state initializer, all auxiliary variables, and both mass and chemistry constraints. pH NRMSE is reported in units of $10^{-4}$, while \(p\mathrm{CO}_2^s\), DIC, and Chem. are reported in units of $10^{-2}$. Drift values are reported in per mille. All values are rounded to three significant digits. \textit{Div.} denotes divergent results.}
\label{tab:weekly_daily_ablation_full}

\begin{tabular*}{\textwidth}{@{\extracolsep{\fill}}lccccccc@{}}
\toprule
Factor & Setting & pH & \(p\mathrm{CO}_2^s\) & DIC & Chem. & Drift$_{\mathrm{local}}$ & Drift$_{\mathrm{global}}$ \\
\midrule

\multirow{4}{*}{Interval length}
& 1 & 4.89 & 1.47 & 1.29 & 1.38 & 0.61\textperthousand & Div. \\
& 2 & 4.64 & 1.44 & 1.27 & 1.35 & 0.63\textperthousand & 7.68\textperthousand \\
& 3 & 5.37 & 1.59 & 1.50 & 1.54 & 0.85\textperthousand & 8.35\textperthousand \\
& 4 & 4.79 & 1.35 & 1.19 & 1.27 & 0.77\textperthousand & 3.59\textperthousand \\

\midrule
\multirow{4}{*}{Initializer depth}
& 4  & 4.74 & 1.35 & 1.20 & 1.27 & 0.63\textperthousand & 8.21\textperthousand \\
& 6  & 4.80 & 1.33 & 1.18 & 1.26 & 0.60\textperthousand & 7.96\textperthousand \\
& 12 & 4.64 & 1.44 & 1.27 & 1.35 & 0.63\textperthousand & 7.68\textperthousand \\
& 24 & 4.56 & 1.40 & 1.26 & 1.33 & 0.60\textperthousand & 7.79\textperthousand \\

\midrule
\multirow{4}{*}{Decoder/source depth}
& 2 & 4.58 & 1.44 & 1.30 & 1.37 & 0.75\textperthousand & Div. \\
& 3 & 4.89 & 1.33 & 1.16 & 1.25 & 0.64\textperthousand & Div. \\
& 4 & 4.64 & 1.44 & 1.27 & 1.35 & 0.63\textperthousand & 7.68\textperthousand \\
& 5 & 4.91 & 1.25 & 1.10 & 1.17 & 0.53\textperthousand & 12.1\textperthousand \\

\midrule
\multirow{4}{*}{Initializer}
& ViT    & 4.91 & 1.41 & 1.25 & 1.33 & 0.51\textperthousand & 7.01\textperthousand \\
& 2D CNN & 4.76 & 1.33 & 1.18 & 1.26 & 0.54\textperthousand & Div. \\
& 3D CNN & 4.78 & 1.42 & 1.26 & 1.34 & 0.57\textperthousand & Div. \\
& ViViT  & 4.64 & 1.44 & 1.27 & 1.35 & 0.63\textperthousand & 7.68\textperthousand \\

\bottomrule
\end{tabular*}

\vspace{6pt}

\begin{tabular*}{\textwidth}{@{\extracolsep{\fill}}cccccccccc@{}}
\toprule
Metric & \(p\mathrm{CO}_2^s\) & \(p\mathrm{CO}_2^a\) & Chl. & SSH & MLD & Salt & Temp. & $u$ & $v$ \\
\midrule
\(p\mathrm{CO}_2^s\) & 2.70 & 1.40 & 1.38 & 1.34 & 1.33 & 1.47 & 1.37 & 1.33 & 1.39 \\
DIC      & 2.44 & 1.27 & 1.22 & 1.19 & 1.18 & 1.32 & 1.21 & 1.19 & 1.23 \\
\bottomrule
\end{tabular*}

\end{table*}

\subsection{Uncertainty Estimation}
\label{app:uncertainty_results}

To evaluate predictive uncertainty, we start from the converged deterministic REACT checkpoint and freeze the reconstruction model.
We then train an uncertainty head to estimate the standard deviation of a Gaussian predictive distribution around the deterministic prediction.
The predictive mean is fixed by the deterministic model, while the uncertainty head learns the error scale around this mean.
For the selected checkpoint, the one-standard-deviation coverage values are $0.821$, $0.752$, and $0.640$ for pH, \(p\mathrm{CO}_2^s\), and DIC, respectively.
The corresponding two-standard-deviation coverage values are $0.963$, $0.971$, and $0.958$.
These two-standard-deviation coverages are close to the nominal Gaussian coverage of $0.95$, suggesting that the learned uncertainty intervals are broadly consistent with the observed reconstruction errors.

\begin{figure}[t]
    \centering
    \captionsetup{font=footnotesize,skip=2pt}
    \includegraphics[width=0.48\linewidth]{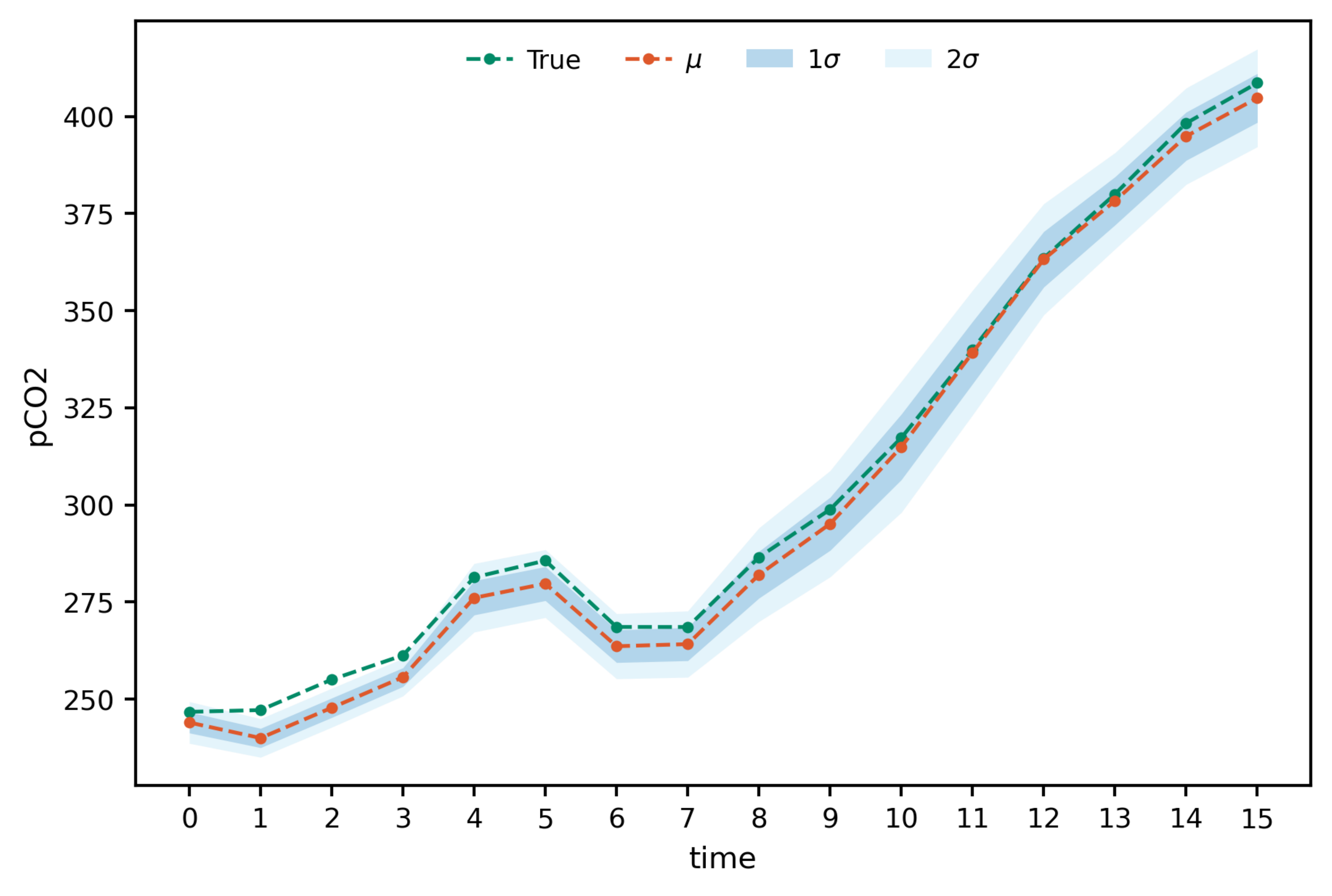}
    \hfill
    \includegraphics[width=0.48\linewidth]{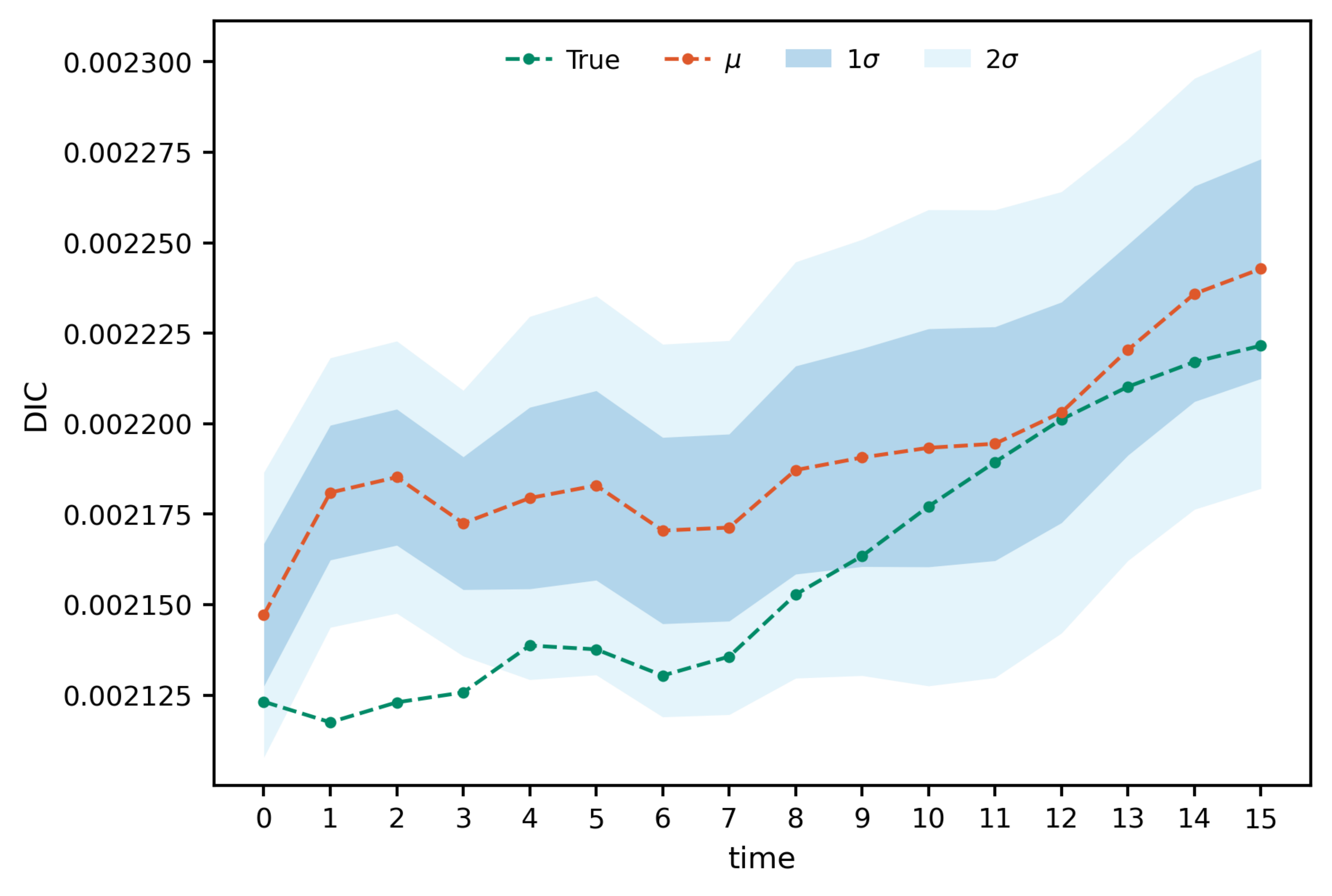}
    \caption{\textbf{Uncertainty estimation of REACT for \(p\mathrm{CO}_2^s\) and DIC.}
    The predictive means are shown with one- and two-standard-deviation intervals.}
    \label{figure:uncertainty_pco2_dic}
\end{figure}

\section{Limitations}
\label{app:limitations}
A limitation of the REACT model is that the learned source module lacks full interpretability. Although the source term improves reconstruction by capturing non-conservative variations in the carbon cycle, the learned correction is not explicitly decomposed into individual physical or biogeochemical mechanisms, such as air--sea exchange, biological activity, or unresolved mixing. Consequently, while the source module provides a flexible way to account for processes that are not directly resolved by the transport equation, its attribution to underlying mechanisms remains limited.

Furthermore, REACT is currently designed for sea surface pH reconstruction and does not explicitly account for vertical carbon transport or subsurface carbonate dynamics. This limits its ability to characterize the three-dimensional structure of ocean carbonate chemistry, which is important for understanding carbon-cycle variability and marine biogeochemical processes. Extending REACT to three-dimensional ocean fields is therefore a promising direction for future work.

\newpage

\section*{NeurIPS Paper Checklist}

\begin{enumerate}

\item {\bf Claims}
    \item[] Question: Do the main claims made in the abstract and introduction accurately reflect the paper's contributions and scope?
    \item[] Answer: \answerYes{} 
    \item[] Justification: The main claims and scope are clearly stated in the abstract and Introduction~\ref{sec:introduction}.
    \item[] Guidelines:
    \begin{itemize}
        \item The answer \answerNA{} means that the abstract and introduction do not include the claims made in the paper.
        \item The abstract and/or introduction should clearly state the claims made, including the contributions made in the paper and important assumptions and limitations. A \answerNo{} or \answerNA{} answer to this question will not be perceived well by the reviewers. 
        \item The claims made should match theoretical and experimental results, and reflect how much the results can be expected to generalize to other settings. 
        \item It is fine to include aspirational goals as motivation as long as it is clear that these goals are not attained by the paper. 
    \end{itemize}

\item {\bf Limitations}
    \item[] Question: Does the paper discuss the limitations of the work performed by the authors?
    \item[] Answer: \answerYes{} 
    \item[] Justification: We discuss the limitations of REACT in Appendix~\ref{app:limitations}.
    \item[] Guidelines:
    \begin{itemize}
        \item The answer \answerNA{} means that the paper has no limitation while the answer \answerNo{} means that the paper has limitations, but those are not discussed in the paper. 
        \item The authors are encouraged to create a separate ``Limitations'' section in their paper.
        \item The paper should point out any strong assumptions and how robust the results are to violations of these assumptions (e.g., independence assumptions, noiseless settings, model well-specification, asymptotic approximations only holding locally). The authors should reflect on how these assumptions might be violated in practice and what the implications would be.
        \item The authors should reflect on the scope of the claims made, e.g., if the approach was only tested on a few datasets or with a few runs. In general, empirical results often depend on implicit assumptions, which should be articulated.
        \item The authors should reflect on the factors that influence the performance of the approach. For example, a facial recognition algorithm may perform poorly when image resolution is low or images are taken in low lighting. Or a speech-to-text system might not be used reliably to provide closed captions for online lectures because it fails to handle technical jargon.
        \item The authors should discuss the computational efficiency of the proposed algorithms and how they scale with dataset size.
        \item If applicable, the authors should discuss possible limitations of their approach to address problems of privacy and fairness.
        \item While the authors might fear that complete honesty about limitations might be used by reviewers as grounds for rejection, a worse outcome might be that reviewers discover limitations that aren't acknowledged in the paper. The authors should use their best judgment and recognize that individual actions in favor of transparency play an important role in developing norms that preserve the integrity of the community. Reviewers will be specifically instructed to not penalize honesty concerning limitations.
    \end{itemize}

\item {\bf Theory assumptions and proofs}
    \item[] Question: For each theoretical result, does the paper provide the full set of assumptions and a complete (and correct) proof?
    \item[] Answer: \answerYes{} 
    \item[] Justification: The full assumptions, formal statements, and proofs are provided in Appendix~\ref{app:react-theory}.
    \item[] Guidelines:
    \begin{itemize}
        \item The answer \answerNA{} means that the paper does not include theoretical results. 
        \item All the theorems, formulas, and proofs in the paper should be numbered and cross-referenced.
        \item All assumptions should be clearly stated or referenced in the statement of any theorems.
        \item The proofs can either appear in the main paper or the supplemental material, but if they appear in the supplemental material, the authors are encouraged to provide a short proof sketch to provide intuition. 
        \item Inversely, any informal proof provided in the core of the paper should be complemented by formal proofs provided in appendix or supplemental material.
        \item Theorems and Lemmas that the proof relies upon should be properly referenced. 
    \end{itemize}

    \item {\bf Experimental result reproducibility}
    \item[] Question: Does the paper fully disclose all the information needed to reproduce the main experimental results of the paper to the extent that it affects the main claims and/or conclusions of the paper (regardless of whether the code and data are provided or not)?
    \item[] Answer: \answerYes{} 
    \item[] Justification: We provide detailed experimental settings in Appendix~\ref{app:experiment_details} and hyperparameter configurations in Appendix~\ref{app:model_params}; we also provide an anonymous repository containing our code and data through \href{https://anonymous.4open.science/r/REACT-67E7}{this link}.
    \item[] Guidelines:
    \begin{itemize}
        \item The answer \answerNA{} means that the paper does not include experiments.
        \item If the paper includes experiments, a \answerNo{} answer to this question will not be perceived well by the reviewers: Making the paper reproducible is important, regardless of whether the code and data are provided or not.
        \item If the contribution is a dataset and\slash or model, the authors should describe the steps taken to make their results reproducible or verifiable. 
        \item Depending on the contribution, reproducibility can be accomplished in various ways. For example, if the contribution is a novel architecture, describing the architecture fully might suffice, or if the contribution is a specific model and empirical evaluation, it may be necessary to either make it possible for others to replicate the model with the same dataset, or provide access to the model. In general. releasing code and data is often one good way to accomplish this, but reproducibility can also be provided via detailed instructions for how to replicate the results, access to a hosted model (e.g., in the case of a large language model), releasing of a model checkpoint, or other means that are appropriate to the research performed.
        \item While NeurIPS does not require releasing code, the conference does require all submissions to provide some reasonable avenue for reproducibility, which may depend on the nature of the contribution. For example
        \begin{enumerate}
            \item If the contribution is primarily a new algorithm, the paper should make it clear how to reproduce that algorithm.
            \item If the contribution is primarily a new model architecture, the paper should describe the architecture clearly and fully.
            \item If the contribution is a new model (e.g., a large language model), then there should either be a way to access this model for reproducing the results or a way to reproduce the model (e.g., with an open-source dataset or instructions for how to construct the dataset).
            \item We recognize that reproducibility may be tricky in some cases, in which case authors are welcome to describe the particular way they provide for reproducibility. In the case of closed-source models, it may be that access to the model is limited in some way (e.g., to registered users), but it should be possible for other researchers to have some path to reproducing or verifying the results.
        \end{enumerate}
    \end{itemize}

\item {\bf Open access to data and code}
    \item[] Question: Does the paper provide open access to the data and code, with sufficient instructions to faithfully reproduce the main experimental results, as described in supplemental material?
    \item[] Answer: \answerYes{} 
    \item[] Justification: We provide the code and data through an anonymous repository available at \href{https://anonymous.4open.science/r/REACT-67E7}{this link}.
    \item[] Guidelines:
    \begin{itemize}
        \item The answer \answerNA{} means that paper does not include experiments requiring code.
        \item Please see the NeurIPS code and data submission guidelines (\url{https://neurips.cc/public/guides/CodeSubmissionPolicy}) for more details.
        \item While we encourage the release of code and data, we understand that this might not be possible, so \answerNo{} is an acceptable answer. Papers cannot be rejected simply for not including code, unless this is central to the contribution (e.g., for a new open-source benchmark).
        \item The instructions should contain the exact command and environment needed to run to reproduce the results. See the NeurIPS code and data submission guidelines (\url{https://neurips.cc/public/guides/CodeSubmissionPolicy}) for more details.
        \item The authors should provide instructions on data access and preparation, including how to access the raw data, preprocessed data, intermediate data, and generated data, etc.
        \item The authors should provide scripts to reproduce all experimental results for the new proposed method and baselines. If only a subset of experiments are reproducible, they should state which ones are omitted from the script and why.
        \item At submission time, to preserve anonymity, the authors should release anonymized versions (if applicable).
        \item Providing as much information as possible in supplemental material (appended to the paper) is recommended, but including URLs to data and code is permitted.
    \end{itemize}

\item {\bf Experimental setting/details}
    \item[] Question: Does the paper specify all the training and test details (e.g., data splits, hyperparameters, how they were chosen, type of optimizer) necessary to understand the results?
    \item[] Answer: \answerYes{} 
    \item[] Justification: We provide the data split protocol in Table~\ref{tab:data_split} and report all training details in Appendix~\ref{app:experiment_details}.
    \item[] Guidelines:
    \begin{itemize}
        \item The answer \answerNA{} means that the paper does not include experiments.
        \item The experimental setting should be presented in the core of the paper to a level of detail that is necessary to appreciate the results and make sense of them.
        \item The full details can be provided either with the code, in appendix, or as supplemental material.
    \end{itemize}

\item {\bf Experiment statistical significance}
    \item[] Question: Does the paper report error bars suitably and correctly defined or other appropriate information about the statistical significance of the experiments?
    \item[] Answer: \answerYes{} 
    \item[] Justification: We report the standard errors in the main results table, as shown in Table~\ref{tab:main_results}.
    \item[] Guidelines:
    \begin{itemize}
        \item The answer \answerNA{} means that the paper does not include experiments.
        \item The authors should answer \answerYes{} if the results are accompanied by error bars, confidence intervals, or statistical significance tests, at least for the experiments that support the main claims of the paper.
        \item The factors of variability that the error bars are capturing should be clearly stated (for example, train/test split, initialization, random drawing of some parameter, or overall run with given experimental conditions).
        \item The method for calculating the error bars should be explained (closed form formula, call to a library function, bootstrap, etc.)
        \item The assumptions made should be given (e.g., Normally distributed errors).
        \item It should be clear whether the error bar is the standard deviation or the standard error of the mean.
        \item It is OK to report 1-sigma error bars, but one should state it. The authors should preferably report a 2-sigma error bar than state that they have a 96\% CI, if the hypothesis of Normality of errors is not verified.
        \item For asymmetric distributions, the authors should be careful not to show in tables or figures symmetric error bars that would yield results that are out of range (e.g., negative error rates).
        \item If error bars are reported in tables or plots, the authors should explain in the text how they were calculated and reference the corresponding figures or tables in the text.
    \end{itemize}

\item {\bf Experiments compute resources}
    \item[] Question: For each experiment, does the paper provide sufficient information on the computer resources (type of compute workers, memory, time of execution) needed to reproduce the experiments?
    \item[] Answer: \answerYes{} 
    \item[] Justification: We provide the hardware setting and computational cost details in Appendix~\ref{app:efficiency}, including the GPU type, training time, inference time, and peak VRAM usage.
    \item[] Guidelines:
    \begin{itemize}
        \item The answer \answerNA{} means that the paper does not include experiments.
        \item The paper should indicate the type of compute workers CPU or GPU, internal cluster, or cloud provider, including relevant memory and storage.
        \item The paper should provide the amount of compute required for each of the individual experimental runs as well as estimate the total compute. 
        \item The paper should disclose whether the full research project required more compute than the experiments reported in the paper (e.g., preliminary or failed experiments that didn't make it into the paper). 
    \end{itemize}
    
\item {\bf Code of ethics}
    \item[] Question: Does the research conducted in the paper conform, in every respect, with the NeurIPS Code of Ethics \url{https://neurips.cc/public/EthicsGuidelines}?
    \item[] Answer: \answerYes{} 
    \item[] Justification: This study follows the ethical guidelines and requirements specified by NeurIPS.
    \item[] Guidelines:
    \begin{itemize}
        \item The answer \answerNA{} means that the authors have not reviewed the NeurIPS Code of Ethics.
        \item If the authors answer \answerNo, they should explain the special circumstances that require a deviation from the Code of Ethics.
        \item The authors should make sure to preserve anonymity (e.g., if there is a special consideration due to laws or regulations in their jurisdiction).
    \end{itemize}

\item {\bf Broader impacts}
    \item[] Question: Does the paper discuss both potential positive societal impacts and negative societal impacts of the work performed?
    \item[] Answer: \answerYes{} 
    \item[] Justification: We discuss broader impacts in Conclusion~\ref{sec:conclusion}, including the potential extension of REACT to other marine active tracers.
    \item[] Guidelines:
    \begin{itemize}
        \item The answer \answerNA{} means that there is no societal impact of the work performed.
        \item If the authors answer \answerNA{} or \answerNo, they should explain why their work has no societal impact or why the paper does not address societal impact.
        \item Examples of negative societal impacts include potential malicious or unintended uses (e.g., disinformation, generating fake profiles, surveillance), fairness considerations (e.g., deployment of technologies that could make decisions that unfairly impact specific groups), privacy considerations, and security considerations.
        \item The conference expects that many papers will be foundational research and not tied to particular applications, let alone deployments. However, if there is a direct path to any negative applications, the authors should point it out. For example, it is legitimate to point out that an improvement in the quality of generative models could be used to generate Deepfakes for disinformation. On the other hand, it is not needed to point out that a generic algorithm for optimizing neural networks could enable people to train models that generate Deepfakes faster.
        \item The authors should consider possible harms that could arise when the technology is being used as intended and functioning correctly, harms that could arise when the technology is being used as intended but gives incorrect results, and harms following from (intentional or unintentional) misuse of the technology.
        \item If there are negative societal impacts, the authors could also discuss possible mitigation strategies (e.g., gated release of models, providing defenses in addition to attacks, mechanisms for monitoring misuse, mechanisms to monitor how a system learns from feedback over time, improving the efficiency and accessibility of ML).
    \end{itemize}
    
\item {\bf Safeguards}
    \item[] Question: Does the paper describe safeguards that have been put in place for responsible release of data or models that have a high risk for misuse (e.g., pre-trained language models, image generators, or scraped datasets)?
    \item[] Answer: \answerNA{} 
    \item[] Justification: This work does not involve data, models, or released assets with high misuse or dual-use risks.
    \item[] Guidelines:
    \begin{itemize}
        \item The answer \answerNA{} means that the paper poses no such risks.
        \item Released models that have a high risk for misuse or dual-use should be released with necessary safeguards to allow for controlled use of the model, for example by requiring that users adhere to usage guidelines or restrictions to access the model or implementing safety filters. 
        \item Datasets that have been scraped from the Internet could pose safety risks. The authors should describe how they avoided releasing unsafe images.
        \item We recognize that providing effective safeguards is challenging, and many papers do not require this, but we encourage authors to take this into account and make a best faith effort.
    \end{itemize}

\item {\bf Licenses for existing assets}
    \item[] Question: Are the creators or original owners of assets (e.g., code, data, models), used in the paper, properly credited and are the license and terms of use explicitly mentioned and properly respected?
    \item[] Answer: \answerNo{} 
    \item[] Justification: All code implementations are cited with license details in Appendix~\ref{app:model_params}. For the WOD datasets, although we were unable to identify their specific licenses, we cite their official sources and provide the official links.
    \item[] Guidelines:
    \begin{itemize}
        \item The answer \answerNA{} means that the paper does not use existing assets.
        \item The authors should cite the original paper that produced the code package or dataset.
        \item The authors should state which version of the asset is used and, if possible, include a URL.
        \item The name of the license (e.g., CC-BY 4.0) should be included for each asset.
        \item For scraped data from a particular source (e.g., website), the copyright and terms of service of that source should be provided.
        \item If assets are released, the license, copyright information, and terms of use in the package should be provided. For popular datasets, \url{paperswithcode.com/datasets} has curated licenses for some datasets. Their licensing guide can help determine the license of a dataset.
        \item For existing datasets that are re-packaged, both the original license and the license of the derived asset (if it has changed) should be provided.
        \item If this information is not available online, the authors are encouraged to reach out to the asset's creators.
    \end{itemize}

\item {\bf New assets}
    \item[] Question: Are new assets introduced in the paper well documented and is the documentation provided alongside the assets?
    \item[] Answer: \answerYes{} 
    \item[] Justification: We provide the newly released code and data, together with documentation for reproducing the experiments, through an anonymized repository available at \href{https://anonymous.4open.science/r/REACT-67E7}{this link}.
    \item[] Guidelines:
    \begin{itemize}
        \item The answer \answerNA{} means that the paper does not release new assets.
        \item Researchers should communicate the details of the dataset\slash code\slash model as part of their submissions via structured templates. This includes details about training, license, limitations, etc. 
        \item The paper should discuss whether and how consent was obtained from people whose asset is used.
        \item At submission time, remember to anonymize your assets (if applicable). You can either create an anonymized URL or include an anonymized zip file.
    \end{itemize}

\item {\bf Crowdsourcing and research with human subjects}
    \item[] Question: For crowdsourcing experiments and research with human subjects, does the paper include the full text of instructions given to participants and screenshots, if applicable, as well as details about compensation (if any)? 
    \item[] Answer: \answerNA{} 
    \item[] Justification: This work does not involve crowdsourcing experiments or research with human subjects.
    \item[] Guidelines:
    \begin{itemize}
        \item The answer \answerNA{} means that the paper does not involve crowdsourcing nor research with human subjects.
        \item Including this information in the supplemental material is fine, but if the main contribution of the paper involves human subjects, then as much detail as possible should be included in the main paper. 
        \item According to the NeurIPS Code of Ethics, workers involved in data collection, curation, or other labor should be paid at least the minimum wage in the country of the data collector. 
    \end{itemize}

\item {\bf Institutional review board (IRB) approvals or equivalent for research with human subjects}
    \item[] Question: Does the paper describe potential risks incurred by study participants, whether such risks were disclosed to the subjects, and whether Institutional Review Board (IRB) approvals (or an equivalent approval/review based on the requirements of your country or institution) were obtained?
    \item[] Answer: \answerNA{} 
    \item[] Justification: This work does not involve crowdsourcing experiments or research with human subjects.
    \item[] Guidelines:
    \begin{itemize}
        \item The answer \answerNA{} means that the paper does not involve crowdsourcing nor research with human subjects.
        \item Depending on the country in which research is conducted, IRB approval (or equivalent) may be required for any human subjects research. If you obtained IRB approval, you should clearly state this in the paper. 
        \item We recognize that the procedures for this may vary significantly between institutions and locations, and we expect authors to adhere to the NeurIPS Code of Ethics and the guidelines for their institution. 
        \item For initial submissions, do not include any information that would break anonymity (if applicable), such as the institution conducting the review.
    \end{itemize}

\item {\bf Declaration of LLM usage}
    \item[] Question: Does the paper describe the usage of LLMs if it is an important, original, or non-standard component of the core methods in this research? Note that if the LLM is used only for writing, editing, or formatting purposes and does \emph{not} impact the core methodology, scientific rigor, or originality of the research, declaration is not required.
    \item[] Answer: \answerNA{} 
    \item[] Justification: The core method development in this work does not involve LLMs as important, original, or non-standard components.
    \item[] Guidelines:
    \begin{itemize}
        \item The answer \answerNA{} means that the core method development in this research does not involve LLMs as any important, original, or non-standard components.
        \item Please refer to our LLM policy in the NeurIPS handbook for what should or should not be described.
    \end{itemize}

\end{enumerate}

\end{document}